\newcommand{\bu}{\bm u}
\newcommand{\bv}{\bm v}
\newcommand{\strgconvex}{\mu}
\definecolor{mydarkblue}{rgb}{0,0.08,0.45}
\newcommand\convert[1]{\pgfmathprintnumber{#1}}
\newtheorem{theorem}{Theorem}
\newtheorem{lemma}[theorem]{Lemma}
\title{Adaptive Stochastic Dual Coordinate Ascent \\ for Conditional Random Fields}
\author{
    Rémi Le Priol \\
    MILA and DIRO \\
    Université de Montréal, Canada \\
    \And Alexandre Piché \\
    MILA and DIRO \\
    Université de Montréal, Canada \\
    \And Simon Lacoste-Julien \\
    MILA and DIRO \\
    Université de Montréal, Canada \\
}
\begin{document}

\maketitle

 \begin{abstract}
 \vspace{-2mm}
This work investigates the training of conditional random fields (CRFs) via the stochastic dual coordinate ascent (SDCA) algorithm of~\citet{shalev2016accelerated}.
SDCA enjoys a linear convergence rate and a strong empirical performance for binary classification problems.
However, it has never been used to train CRFs.
Yet it benefits from an ``exact'' line search with a single marginalization oracle call, unlike previous approaches.
In this paper, we adapt SDCA to train CRFs, and we enhance it with an adaptive non-uniform sampling strategy based on block duality gaps.
We perform experiments on four standard sequence prediction tasks.
SDCA demonstrates performances on par with the state of the art, and improves over it on three of the four datasets, which have in common the use of sparse features.
\end{abstract}

\vspace{-2mm}
\section{INTRODUCTION}

The conditional random field (CRF) model~\citep{lafferty2001conditional} is a common tool in natural language processing and computer vision for structured prediction.
The optimization of this model is notoriously challenging.
\citet{schmidt2015non} describes a practical implementation of the stochastic average gradient (SAG) algorithm~\citep{roux2012stochastic} for CRFs and proposes a non-uniform sampling scheme that boosts performance.
This algorithm (SAG-NUS) is currently the state of the art for CRFs optimization and we refer to \citet{schmidt2015non} for a detailed review of competing methods.

Deterministic (batch) methods such as L-BFGS~\citep{sha2003shallow,wallach2002efficient} have linear convergence rate but the cost per iteration is large.
On the other hand, the online exponentiated gradient method (OEG)~\citep{collins2008exponentiated} and SAG are both members of a family of algorithms with cheap stochastic updates and linear convergence rates, and they have both been applied to the training of CRFs.
They are called variance reduced algorithms, because their common point is to use memory to reduce the variance of the stochastic update direction as they get closer from the optimum.
\citet{johnson2013accelerating} coined the name stochastic variance reduced gradient (SVRG) and \citet{defazio2014saga} unified the family.

The stochastic dual coordinate ascent (SDCA) algorithm proposed by \citet{shalev-shwartz_stochastic_2013,shalev2016accelerated} is a member of this family that has not yet been applied to CRFs.
It is closely related to OEG in that it also does block-coordinate ascent on the dual objective.
Yet an interesting advantage of SDCA over OEG (and SAG) is that the form of its update makes it possible to perform an ``exact'' line search with only \emph{one} call to the \emph{marginalization oracle}, i.e. the computation of the marginal probabilities for the CRF.
This is in contrast to both SAG and OEG where each step size change requires a new call to the marginalization oracle.
We thus propose in this paper to investigate the performance of SDCA for training CRFs.

\textbf{Contributions.} We adapt the multiclass variant of SDCA to the CRF setting by considering the marginal probabilities over the cliques of the graphical model.
We provide a novel interpretation of SDCA as a relaxed fixed point update and highlights the block separability of the duality gap.
We propose to enhance SDCA with an adaptive non-uniform sampling strategy based on the block gaps, and analyze its theoretical convergence improvement over uniform sampling.
We compare the state-of-the-art methods on four prediction tasks with a sequence structure.
SDCA with uniform sampling performs comparably with OEG and SAG.
When SDCA is enhanced with the adaptive sampling strategy, it outperforms its competitors in terms of number of parameters updates on three of the tasks.
These three tasks are all about natural language with handcrafted sparse features.
We hypothesize that the efficiency of the dual methods can be related to the sparsity of these features.

\textbf{Related work.}
Our proposed gap sampling strategy is similar to the one from~\citet{osokin2016minding} in the context of SDCA applied to the structured SVM objective, which reduces to the block-coordinate Frank-Wolfe (BCFW) algorithm~\citep{lacoste2013block}.
\citet{dunner2017efficient} recently analyzed a general adaptive sampling scheme for approximate block coordinate ascent that generalizes SDCA.
Their proposed sampling scheme (which basically chooses the biggest gap) was motivated in the different context of mixed GPU and CPU computations, which does not apply to our setting.
Our proposed practical strategy takes in consideration the staleness of the gaps and is more robust in our experimental setting.
\citet{csiba2015stochastic} proposes an adaptive sampling scheme for SDCA for binary classification which unfortunately cannot be generalized to the CRF setting due to an intractable computation. Closely related to our work is \citet{perekrestenko17a} who analyzed several adaptive sampling strategies for a generalization of the primal-dual SDCA setup, including our proposed gap sampling scheme. However their analysis was focused on the single coordinate descent method (e.g. binary SDCA) and on sublinear convergence results obtained when strong convexity is not assumed. We cover instead the block-coordinate approach relevant to CRFs, and one of our notable results is to show that the linear convergence rate for gap sampling \textbf{dominates} the one for uniform sampling, in contrast to what happens in the sublinear regime studied by \citet{perekrestenko17a}. 

\textbf{Outline.}
We review the optimization problem for CRFs as well as provide novel insights on the primal-dual optimization structure in Section~\ref{sec:CRF}.
We present SDCA for CRFs in Section~\ref{sec:SDCA} and discuss important implementation aspects in Section~\ref{sec:implementation}.
We present and analyze various adaptive sampling schemes for SDCA in Section~\ref{Adaptive Sampling}.
We provide experiments in Section~\ref{sec:experiments} and discuss the implications in Section~\ref{sec:discussion}.

\section{CONDITIONAL RANDOM FIELDS} \label{sec:CRF}
In this section, we review the CRF model and its associated primal and dual optimization problems.
We then derive some interesting properties which motivate several optimization algorithms.

\subsection{DEFINITION}
A CRF models the conditional probability of a structured output $y \in \cY$ (e.g. a sequence) given an input $x\in \mathcal X$ with a Markov random field that uses an exponential family parameterization with sufficient statistics $F(x,y) \in \real^d$ and parameters $\bw \in \real^d$ : $p(y | x ; \bw) \propto \exp(\bw^{\top}F(x, y))$. The feature vector $F$ decomposes as a sum over the cliques $C \in \cC$ of the graphical model for $y$: $F(x, y) = \sum_C F_C(x, y_C)$, where $y_C$ denotes the subset of coordinates of $y$ selected by the indices from the set $C$. See Figure~\ref{crf example} for an illustration.

\begin{figure}
	\centering \includegraphics[width=.4\textwidth]{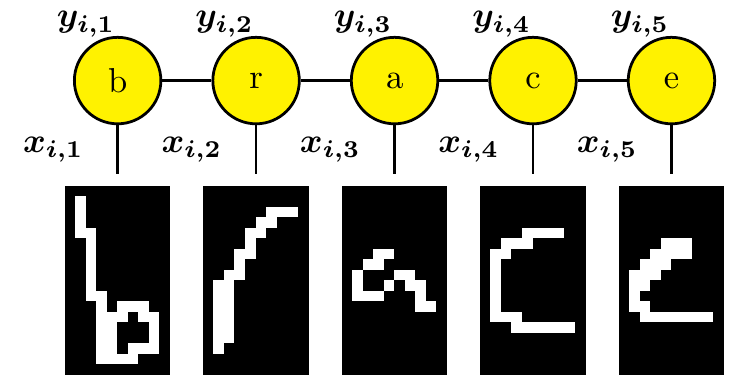}
	\caption{
		Example of graphical model for the optical character recognition (OCR) task.
		We want to exploit the structure of the word to predict that $y_{i,5}$ is an "e" and not a "c".
		This can be done by working on the pairs $y_{i,\{t, t+1\} } = (y_{i, t}, y_{i, t+1})$, the cliques of that model.
		}
		\label{crf example}
\end{figure}

\subsection{PRIMAL PROBLEM}
We have a data set $(x_i, y_i)_{i \in [1,n]}$ of $n$ i.i.d. input and structured output pairs.
The parameter is learned by minimizing the $\ell_2$-regularized negative log-likelihood:
\beq\label{negative log-likelihood}
	\min_{\bw \in \real^d} \frac{\lambda}{2}\| \bm w\|_2^2 + \frac{1}{n} \sum_{i=1}^{n} -\log\left(p(y_i | x_i ; \bw) \right) \, .
\eeq
We now rewrite it using the notation for the SDCA setup for multi-class classification from~\citet{shalev2016accelerated}.
Denote $M_i = |\cY_i|$ the number of labelings for sequence~$i$.
Denote $A_i$ the $d \times M_i$ matrix whose columns are the \textit{corrected features} $\{\psi_i(y) := F(x_i, y_i) - F(x_i, y)\}_{y \in \cY_i}$.
Denote also $\phi_i(s) := \log \big(\sum_{y \in \cY_i} \exp(s_y)\big)$ the log-partition function for the scores $s \in \real^{M_i}$. The negative log-likelihood can be written $-\log(p(y_i|x_i ;\bm w)) = \phi_i(-A_i^{\top} \bm w)$. The primal objective function to minimize over $\bm w \in \real^d$ thus  becomes:
\begin{equation}
	\label{eq:primal_problem}
	\cP(\bm w) := \frac{\lambda}{2}\| \bm w\|_2^2
	+ \frac{1}{n}   \sum_{i=1}^{n} \phi_i(-A_i^{\top} \bm w) \, .
\end{equation}

\subsection{DUAL FORMULATION}
The above minimization problem~\eqref{eq:primal_problem} has an equivalent {\it Fenchel convex dual} problem \citep{lebanon2002boosting}.
Denote $\Delta_{M}$ the probability simplex over $M$ elements.
Denote $\alpha_i \in \Delta_{M_i}$ the set of dual variables for a given $x_i$.
The dual problem handles directly the probability of the labels for the training set.
The dual objective to maximize over the choice of $\balpha = (\alpha_1, \ldots, \alpha_n) \in \Delta_{|\cY_1|} \times \ldots \times \Delta_{|\cY_n|} $ is:
\begin{equation}
	\label{dual problem}
	\cD(\balpha) := -\frac{\lambda}{2} \| \frac{1}{n \lambda} \sum_i A_i \alpha_i \|^2
	+ \frac{1}{n} \sum_{i=1}^n H(\alpha_i) \, ,
\end{equation}
where $H(\alpha_i) := - \sum_{y \in \cY_i} \alpha_i(y) \log(\alpha_i(y))$ is the entropy of the probability distribution $\alpha_i$. The negative entropy appears as the convex conjugate of the softmax: $-H = \phi^*$.

\subsection{OPTIMALITY CONDITION}
We define the \emph{conjugate weight} function $\hat{w}$ as follows:
\begin{multline*}
		\hat{w}(\balpha)
		:= \frac{1}{n \lambda} \sum_i A_i \alpha_i
		= \frac{1}{\lambda n} \sum_{i=1}^n \mathbb{E}_{y \sim \alpha_i} [\psi_i(y)] \\
		= \frac{1}{\lambda } \left( \frac{1}{ n} \sum_{i=1}^n F(x_i, y_i)
		-  \frac{1}{n} \sum_{i=1}^n \mathbb{E}_{y \sim \alpha_i} [F(x_i, y)] \right) \, .
\end{multline*}
It is the difference between the average of the ground truth features, and the average of the expected features for the dual variable, up to a factor $\frac{1}{\lambda}$.
We can show that $\hat{w}(\balpha^{\star}) = \bm w^{\star}$ where $\bm w^{\star}$ and $\balpha^{\star}$ are respectively the optimal primal parameters and the optimal dual parameters.

We can also  define the \emph{conjugate} probabilities $\hat{\alpha}_i$ as follows:
\begin{equation}
	\forall i, \quad \hat{\alpha}_i(\bm w) := \nabla_s\phi_i(-A_i^{\top} \bm w) = p(.|x_i; \bm w).
	\label{primal to dual}
\end{equation}
We get another optimality condition $\hat{\alpha}(\bm w^{\star}) = \balpha^{\star}$.
These two optimality conditions can be deduced directly from the structure of the duality gaps.

\subsection{DUALITY GAPS}\label{sec:duality gaps}
Note that $\cP(\bm w ) \geq \cD(\balpha)$ is always true, with equality at the optimum. The {\it duality gap} is defined by:
\beq
	g(\bm w, \balpha) = \cP(\bm w) -\cD(\balpha) \, .
\eeq
Note that we can rewrite the primal gradient as following:
\begin{equation}\label{primal gradient}
	\nabla \cP (\bw) = \lambda( \bw - \hat{w}\circ \hat{\alpha}(\bm w) ) \, .
\end{equation}
One can verify that:
\beqa
	\label{primal duality gap}
	g( \bm w,\hat{\alpha}(\bm w))
	& = & \frac{\lambda}{2} \|\bm w- \hat{w}(\hat{\alpha}(\bm w))\|^2 \\
	& = &  \frac{1}{2 \lambda} \| \nabla \mathcal P (\bm w) \|^2 \, . \label{eq:gradientGap}
\eeqa
This  structure of the gap for the primal weights and its conjugate dual probabilities have an equivalent in the dual.
Denote the Fenchel duality gap of $\phi_i$ for the scores $s_i = -A_i^T\bw$ and probabilities $\balpha_i$:
\begin{equation} \label{eq:Fench}
	F_i(s_i,\alpha_i) := \phi_i(s_i) + \phi_i^*(\alpha_i) + s_i^T \alpha_i \geq 0.
\end{equation}
The positivity comes from the definition of convex conjugates.
The gap is zero when $s_i$ and $\alpha_i$ are conjugate variables for $\phi_i$, e.g. $\alpha_i = \nabla \phi_i(s_i)$.
For any smooth loss~$\phi_i$, the duality gap between $\hat{w}(\balpha)$ and $\balpha$ decomposes as a sum of Fenchel gaps \citep{shalev-shwartz_accelerated_2013-1}:
\beqa
	\label{eq:Fench_blocks}
	g(\hat{w}(\balpha), \balpha)
	& = &\frac{1}{n} \sum_i F( -A_i^T \hat w (\balpha), \alpha_i).
\eeqa
The log-sum-exp and the entropy are a special pair of conjugates.
Their Fenchel duality gap is also equal to the Bregman divergence generated by $\phi_i^*=-H$, the Kullback-Leibler divergence: $F_i(s_i ,\alpha_i) = D_{KL}(\alpha_i || \nabla\phi_i(s_i) )$. Writing this for the same pair of conjugate variables yields:
\beqa
	\label{dual duality gaps}
	g(\hat{w}(\balpha), \balpha)
	& = &\frac{1}{n} \sum_i D_{KL} (\alpha_i || \hat{ \alpha}_i(\hat{w}( \balpha)).
\eeqa
The duality gaps~\eqref{primal duality gap}  and~\eqref{dual duality gaps} are typically used to monitor the optimization.
In Appendix~\ref{app:bound duality gap}, we explain how one can transfer a convergence guarantee on the primal or dual suboptimality to a convergence guarantee on the duality gap.\footnote{
    This implies that convergence results on the dual problem directly translates to convergence results on the primal and vice-versa;
     a fact apparently missed in the linear rate comparison of~\citet{schmidt2015non}.
}
Moreover, the block-separability of gaps from~\eqref{dual duality gaps} can motivate an adaptive sampling scheme, as we describe in Section~\ref{Adaptive Sampling}.

\subsection{INTERPRETATION}
The primal formulation chooses a $\bm{w}$ of small norm so as to maximise the conditional probability of observing the labels.
Conversely, the  dual formulation chooses conditional probabilities of the labels so as to minimize the $\ell_2$ distance between the expected features and empirical expectation of the ground truth features.
The optimal distribution would be the empirical  distribution, if not for the entropic regularization that favors more uniform probabilities.
This is the regularized version of the classical duality between maximum-likelihood and maximum-entropy for exponential families.

The optimality conditions show that the solution of the primal Problem~\eqref{eq:primal_problem} is also a \emph{fixed point} for the function~$\hat w \circ \hat \alpha$.
Because of the gradient form~\eqref{primal gradient}, the gradient descent update can also be written as a \emph{relaxed} fixed point update:
\begin{align}
		\bw^+
		& = \bw - \gamma \nabla \cP(\bw) \\
		& = (1-\gamma \lambda) \bw  + \gamma \lambda \,\, \hat w \circ \hat \alpha (\bw) \, .
\end{align}
The algorithm SDCA described in the next section also admits a relaxed fixed point update on the block $\alpha_i$ (see~\eqref{eq:dual_fixed_pt_update}).
More generally, optimization algorithms for Problem~\eqref{eq:primal_problem} can often be interpreted as a back and forth between the conjugate variables $w$ and $\hat w(\hat \alpha(\bm w))$ (primal methods) or $\alpha$ and $\hat \alpha(\hat w( \balpha))$ (dual methods).
For instance, one could interpret OEG as a relaxed fixed point iteration over the score variables $s_i = -A_i^T\bw$.
\begin{displaymath}
    \xymatrix{
    	\bm w \ar[r]^-{\hat{\alpha}}
    	&   \left( \nabla_s \phi_i(-A_i^T \bm w) \right)_{i=1}^n \ar[d] \\
		\frac{1}{n \lambda} \sum_i A_i \alpha_i  \ar[u]
		&  \balpha \ar[l]_-{\hat{w}}
	}
\end{displaymath}
Most of the results presented in this section and in Section~\ref{Adaptive Sampling} can be transposed to other kinds of loss and regularization, under some regularity assumptions.
Our focus in this paper is the application of SDCA to CRF models and thus we focused the discussion on the log-likelihood setting and the $\ell_2$ norm, which are widely used.

\section{PROXIMAL STOCHASTIC DUAL COORDINATE ASCENT} \label{sec:SDCA}

We first describe the SDCA in its general setting, and then describe the necessary modifications for training a CRF.

\subsection{GENERAL SETTING}
The stochastic dual coordinate ascent algorithm (SDCA) updates one dual coordinate at a time so as to maximize the dual objective.
SDCA was originally proposed for binary classification~\citep{shalev-shwartz_stochastic_2013} where each dual variable~$\alpha_i$ lives in $\Delta_2 = [0,1]$.
In this case, it is possible to do exact coordinate maximization of the dual objective over a single $\alpha_i$ with standard one dimensional optimization.

In the multi-class setting however, there is no simple way to maximize the dual objective over the block $\alpha_i \in \Delta_K$.
The algorithm with the surprising name of Proximal-SDCA\footnote{We simply call it SDCA in the rest of this paper}, option II~\citep{shalev2016accelerated} proposes a solution to this problem.
It updates $\alpha_i$  in a clever direction derived from the primal-dual relationship, which amounts to a relaxed fixed point update. See Algorithm~\ref{sdca general}.

\begin{algorithm}[t]
    \caption{Prox-SDCA (option II) called SDCA here}%
    \label{sdca general}
	\begin{algorithmic}
        \STATE Initialize $\alpha_i^{(0)} \in \Delta_{M_i}, \forall i$
        \STATE Let $\bw^{(0)} = \hat{w}(\bm{\alpha}^{(0)}) = \frac{1}{\lambda n} \sum_i A_i \alpha_i$
       \FOR{$t=0, 1\dots$}
                \STATE Sample $i$ uniformly at random in $\{1,\ldots,n\}$
                \STATE Let $ \beta_i := \hat{\alpha}_i(\bw) = \nabla_s \phi(-A_i^T \bw)$
                \STATE Let $\delta_i = \beta_i - \alpha_i^{(t)}$ \COMMENT{dual ascent direction}
                \STATE Let $\bv_i = \frac{1}{\lambda n} A_i \delta_i $ \COMMENT{primal direction}
                \STATE Solve Equation~\eqref{line search equation} to get $\gamma^*$ \COMMENT{Line Search}
               \STATE Update $\alpha_i^{(t+1)} := \alpha_i^{(t)} + \gamma^* \delta_i$
               \STATE Update $\bw^{(t+1)} := \hat{w}(\bm{\alpha}^{(t+1)}) = \bw^{(t)} + \gamma^* \bv_i $
        \ENDFOR
	\end{algorithmic}
\end{algorithm}

We now describe the idea.
At all time, we maintain the pair of dual and primal variables~$(\balpha, \bw = \hat w (\balpha))$.
At each step, we sample a training point~$i$.
We compute $\beta_i = \nabla_s \phi_i(-A_i^T \bw) = \hat \alpha_i \circ \hat w(\balpha)$,  the next fixed point iterate.
We then define the dual ascent direction by $\delta_i := \beta_i - \alpha_i$.
Finally we update the block~$\alpha_i$ with the right step size so as to increase the dual objective~$\cD(\balpha)$ using a relaxed fixed point update:
\begin{equation} \label{eq:dual_fixed_pt_update}
	\alpha_i^+ \leftarrow \alpha_i + \gamma \delta_i = (1-\gamma)\alpha_i + \gamma \hat \alpha_i \circ \hat w(\balpha) \, .
\end{equation}
The dual ascent direction is guaranteed to increase $\cD(\balpha)$, unless $\delta_i = 0$ (this actually means that the block is already optimal, see~\eqref{dual duality gaps}).
The primal weights $\bw = \hat w (\balpha)$ are related to $\balpha$ by a linear transformation.
Define the primal direction $\bv_i = \frac{1}{\lambda n} A_i \delta_i \in \real^d$.
One can update the weights directly: $\bw^+ \leftarrow \bw + \gamma \bv_i$.

The step size $\gamma \in [0,1]$  is either fixed, or found via line search.
In practice the fixed step size for which convergence is guaranteed is really small.
The line search is relatively cheap as we are looking at only one block:
\begin{equation}\label{line search equation}
	\gamma^*
	:= \argmax_{\gamma \in [0,1]} - \phi^*_i(\alpha_i + \gamma \delta_i)
	- \frac{\lambda n}{2} \| \bw + \gamma \bv_i \|^2.
\end{equation}
Note that one can decompose the quadratic term and precompute $ \langle \bw, \bv_i \rangle$ and $\| \bv_i \|^2 $ to accelerate the optimisation.
The bottleneck remains the computation of $\phi^*_i$ (and its derivatives).

\subsection{ADAPTATION TO CRF}
In the CRF setting, the dual variable~$\alpha_i$ is exponentially large in the input size~$x_i$.
For a sequence $x_i$ of length $T$ where each node can take up to $K$ values, the number of possible labels is $|\cY_i | = K^{T}$.
It might not even fit in memory.
Instead, the standard approach used in OEG and SAG is to consider the marginal probabilities $(\mu_C)_{C \in \cC}$ on the cliques of the graphical model.
Similarly, we replace $\balpha$ by $\bmu = (\mu_1, \cdots, \mu_n)$, where $\mu_i \in \prod_C \Delta_{C}$ is the concatenation of all the clique marginal vectors for the sample $i$.
For the same sequence $x_i$, this reduces the memory cost to $K^2(T-1)$ for the pair marginals.
We denote $m_i= \sum_C |\cY_{i, C} |$ this new memory fingerprint.
For a sequence long enough, we have $m_i \ll M_i$.
The associated weight vector can still be expressed as function of $\bmu$ thanks to the separability of the features:
\begin{equation}
	\label{weights from marginals}
	\hat w (\bmu) = \frac{1}{\lambda n} \sum_i \sum_C \mathbb E_{\mu_{i, C}}[\psi_{i, C}]
	= \frac{1}{\lambda n} \sum_i B_i \mu_i,
\end{equation}
where $B_i = (\psi_{i,C}(y_{C}) )_{C, y_C} \in \real^{d \times m_i}$ is the horizontal concatenation of the cliques feature vectors.

\begin{algorithm}[t]
    \caption{SDCA for CRF}%
    \label{sdca for crf}
	\begin{algorithmic}
        \STATE Initialize $\mu_{i}^{(0)} \in \prod_C \Delta_{C}$ consistently $\forall i$ \COMMENT{use~\eqref{eq:initialization}}
        \STATE Set $\bm{w}^{(0)}  :=  \hat{w}(\bm\mu^{(0)}) = \frac{1}{\lambda n} \sum_i B_i \mu_i^{(0)}$ \,\, \COMMENT{See \eqref{weights from marginals}}
        \STATE (Optional) Let $\quad g_i = 100, \forall i $
       \FOR{$t=0, 1\dots$}
                \STATE Sample $i$ uniformly at random in $\{1,\ldots,n\}$
                \STATE (Alternatively)  Sample $i$ proportionally to $g_i$
                \STATE Let $\nu_{i, C} (y_C) := p(y_C | x_i; \bw^{(t)}), \forall C\in\cC$  \COMMENT{\textbf{oracle}}
                \STATE (Optional) Let $g_i = \tilde D(\mu_i || \nu_i)$  \COMMENT{duality gap \eqref{divergence marginal}}
                \STATE Let $\delta_i = \nu_i - \mu_i^{(t)}$ \COMMENT{ascent direction}
                \STATE Let $\bv_i = \frac{1}{\lambda n} \hat w(\delta_i)$ \COMMENT{primal direction}
                \STATE Solve Equation~\eqref{crf line search} to get $\gamma^*$  \COMMENT{Line Search}
               \STATE Update $\mu_i^{(t+1)} := \mu_i^{(t)} + \gamma^* \delta_i$
               \STATE Update $\bm{w}^{(t+1)} := \hat{w}(\bm\mu^{(t+1)}) = \bm{w}^{(t)} + \gamma^* \bv_i $
        \ENDFOR
	\end{algorithmic}
\end{algorithm}

Now, assume that the graph has a {\it junction tree} structure $T=(\cC,\cS)$~\citep[Def.~10.3]{koller2009PGM}, where $\cC$ is the set of maximal cliques and $\cS$ the set of separators.
We can then run message passing on the junction tree to infer the new marginals given weights $\bw$: $\hat \mu_i(\bm w) = {p(y_C=. | x_i ;\bw)}$.
We can also now recover the joint probability $\alpha_i(y)$ as a function of its marginals $\mu_{i, C}$ \citep[Def.~10.6]{koller2009PGM}:
\begin{equation}
	\label{joint from marginals}
	\alpha_i(y) = \frac{\prod_{C\in\cC} \mu_{i, C} (y_C)}{\prod_{S \in\cS} \mu_{i, S}(y_S)}.
\end{equation}

Equation~\eqref{joint from marginals} in turn allows us to compute the entropy and the divergences of the joints, using only the marginals. Let $\mu_i$ and $\nu_i$ be the marginals of respectively $\alpha_i$ and $\beta_i$, then the entropy and the Kullback-Leibler divergence are given by:
\begin{equation}
	\label{entropy marginal}
	\tilde H (\mu_i)
	:= H (\alpha_i)
	= \sum_C H(\mu_{i, C}) - \sum_S H(\mu_{i, S})
\end{equation}
and
\begin{multline}
	\label{divergence marginal}
 	\tilde D (\mu_i || \nu_i)
 	:=D_{KL}(\alpha_i|| \beta_i) \\
 	= \sum_C D_{KL}(\mu_{i, C}||\nu_{i,C}) - \sum_S D_{KL}(\mu_{i, S}||\nu_{i, S}). 
\end{multline}

With this expression of the entropy~\eqref{entropy marginal}, we can compute the dual objective, and thus perform the line search:
\begin{equation}\label{crf line search}
	\gamma^* = \argmax_{\gamma \in [0,1]} \tilde H(\mu_i^{(t)} + \gamma \delta_i) - \frac{\lambda n}{2} \| \bw^{(t)} + \gamma \bv_i \|^2.
\end{equation}
With the Kullback-Leibler divergence~\eqref{divergence marginal}, we can compute efficiently the individual duality gaps from~\eqref{dual duality gaps}.
Algorithm~\ref{sdca for crf} describes this variation of SDCA, with as an option a non-uniform sampling strategy defined in  Section~\ref{ssec:gap_sampling}.

\section{IMPLEMENTATION} \label{sec:implementation}
We provide in Appendix~\ref{app:sec:implementation} a discussion of various important implementation aspects summarized here.
\begin{compactenum}
\item The initialization of dual methods for CRFs can significantly influence their performance. As explained in Appendix~\ref{app:sec:implementation}, we use: 
\begin{equation} \label{eq:initialization}
	\balpha^{(0)} := \varepsilon \bu + (1-\varepsilon) \bm\delta \, ,
\end{equation}
where $\bu$ is the uniform distribution on each block, $\bm\delta$ is a unit mass on each ground truth label and $\varepsilon$ is a small number.
\item Storing the dual variable may be expensive and one should allocate a decent amount of memory.
\item The line search requires computing the entropy of the marginals.
This is costly and we used Newton-Raphson algorithm to minimize the number of iterations.
This in turn requires storing the logarithm of the dual variable.
\end{compactenum}
\section{ADAPTIVE SAMPLING FOR SDCA} \label{Adaptive Sampling}

Recently, there has been a lot of attention on non-uniform sampling for stochastic methods.
The general goal is to sample more often points which are harder to classify and can bring more progress on the objective.
These methods are said to be \textit{adaptive} when the sampling probability changes during the optimization.
SDCA itself has had several adaptive schemes proposed.
In the following, we attempt to explain and relate these methods, and suggest new schemes that work well on our problem.

\subsection{ASCENT LEMMA}\label{ascent lemma}
We start by restating the ascent lemma from Equation~(25) in \citet{shalev-shwartz_accelerated_2013-1}.
This lemma inspires and supports all the strategies.

\paragraph{Ascent after sampling $i$:}
At iteration $t$, if we sample $i$ and take a step of size  $\gamma_i \in [0,1]$, we can lower bound the resulting dual improvement:
\begin{align}
	\label{one point descent}
    \hspace{-3mm}
    & n (\cD(\balpha^+) - \cD(\balpha)) \notag \\
    & \geq \gamma_i \underbrace{ \big [ \phi(-A_i^T \bw) + \phi^*(\alpha_i) + \bw^T A_i \alpha_i \big ] }
    _{ \textrm{Fenchel gap} =: g_i} \notag \\
    & \quad + \gamma_i \bigg ( \frac{(1-\gamma_i)}{2} - \frac{\gamma_i R_i}{2 \lambda n} \bigg )\| \beta_i - \alpha_i \|^2_1
\end{align}
where  $R_i := \|A_i\|^2_{1\rightarrow 2}  = \max_{y\in\cY_i} \| \psi_i(y) \|_2^2 $ is the squared radius of the corrected features for sample $i$.

Note that compared to the original text, we used the fact that the regularizer is the $\ell_2$ norm and the loss is $1$-smooth with respect to the $\ell_\infty$ norm.
We define $R:=\max_i R_i$, ${\bar R := \frac{1}{n} \sum_i R_i}$ and $\bar g := \frac{1}{n} \sum_i g_i$ the true duality gap (see~\eqref{eq:Fench}-\eqref{eq:Fench_blocks}).
We also introduce $L_i := \lambda + \frac{R_i}{n}$ an upper bound on the smoothness of loss $i$ plus regularizer for the $\ell_2$ norm.
We recall from Section~\ref{sec:duality gaps} that $g_i = D_{KL}(\alpha_i || \beta_i)$~\eqref{dual duality gaps}.
We give the name \textit{residual} to $d_i := \| \beta_i - \alpha_i \|^2_1$.

This lemma is derived with standard assumptions and inequalities on the smoothness of the loss and the strong convexity of the regularizer.
The first term of the lower bound is the ascent guarantee while the other term gives condition on the step-size to ensure progress.
We refer the reader to the original paper for more details.

To get the expected progress (conditioned on the past) after sampling with probability $\bm p$, we simply need to take the sum of the inequality above after multiplying both sides by $p_i$.
Our goal is to maximize this lower bound by choosing the right probability $\bm p$ and step sizes $\bm \gamma$.
To be able to conclude the proof with the original method, we also want some constants time the duality gap $\bar g$ to appear in the lower bound -- the gap is lower bounded by the dual suboptimality and thus this constant will give the linear rate of convergence.
The lemma can then transpose this result from the dual sub-optimality to the duality gap as described in Appendix~\ref{app:bound duality gap}.
From there on there are two general approaches: importance sampling and duality gap sampling.

\subsection{IMPORTANCE AND RESIDUAL SAMPLING}
With the importance sampling approach, the goal is to set the step-size and the probability so that they cancel each other out: $\gamma_i = \frac{\gamma}{p_i}$.
One then get an unbiased estimate of the true duality gap from~\eqref{dual duality gaps} as the first  term of the upper bound.
What is left is maximizing the second term with respect to $\bm p$.
This is the approach proposed by \citet{Zhao2015StochasticOptimizationImportance} (Importance Sampling, left term below) and generalized by \citet{csiba2015stochastic} (Residual sampling, a.k.a. AdaSDCA for binary classification, right term):
\begin{equation}
		p_i \propto L_i \quad \text{or} \quad p_i \propto d_i \sqrt{L_i}.
\end{equation}
These sampling schemes somehow allow to maximize the second term of~\eqref{one point descent}.
Intuitively, they replace a dependency on $R$ in the convergence rate by a dependency on $\bar R$.
They can give good results on binary and multi-class logistic regression. There are a few issues though.
\vspace{-\topsep}
\begin{itemize}
    \setlength{\parskip}{0pt}
    \setlength{\itemsep}{3pt plus 1pt}
    \item One needs an accurate estimate of the $L_i$.
    \item Importance sampling is not adaptive.
    \item In the CRF setting, the residual is $d_i = \| \beta_i -\alpha_i \|_1^2$.
    It is the squared $\ell^1$ norm of a vector of exponential size.
    We are not aware of any trick to compute it efficiently.
\end{itemize}
\vspace{-\topsep}

\subsection{GAP SAMPLING}\label{ssec:gap_sampling}
To make sure that the second term is positive, the original proof of uniform SDCA sets  $\gamma_i = \gamma = {(1+ \frac{R}{\lambda n})^{-1}}$ to obtain:
\begin{equation}
		n \mathbb E_p[D(\balpha^+) - D(\balpha)]
		\geq \gamma \sum_i p_i g_i.
\end{equation}
Assuming a full knowledge of the duality gaps $g_i$, the optimal decision is to sample the point with maximum duality gap.
This was done by \citet{dunner2017efficient} in the context of multi-class classification on a pair CPU-GPU. While the GPU computes the update, the CPU updates as many duality gaps as possible.
This lead to impressive acceleration over massive datasets.

However, this is not our current setting.
We know and update only one gap at a time (for efficiency).
Because of staleness of the gaps, our experiments with this method did not even converge for the most part (see Section~\ref{experiment sampling}).
We need a more robust method.

We take inspiration from what was done by \citet{osokin2016minding} to improve the Block-Coordinate Frank-Wolfe (BCFW) algorithm \citep{lacoste2013block}.
We propose to bias sampling towards examples whose duality gaps are large: $p_i \propto g_i$.
If we know all the duality gaps, the expected improvement reads:
\begin{equation}
	n \mathbb E_p[D(\balpha^+) - D(\balpha)]
		\geq \chi(\bm g)^2 \, \gamma \, \bar g,
\end{equation}
where $	\chi(\bm g) = \sqrt{ \frac{ \frac{1}{n} \sum_i g_i^2}{\bar g^2} } \in [1, \sqrt n] $ is the non-uniformity of the duality gaps, as defined in \citet[Section 3.1]{osokin2016minding}.
The value $\chi(\bm g)^2 \gamma$ is the value that will appear in the linear convergence rate of this method.
It means that the convergence rate for gap sampling \textbf{dominates} the one for uniform sampling.
This is different from what was observed for  BCFW where they could not prove dominance in general.

In practice we use stale estimates of the gaps and there are no convergence guarantees.
We discuss more this issue in section \ref{experiment sampling}.

We also explored a combination of gap sampling and importance sampling.
We could get similar convergence rate where a trade-off appeared between the mean smoothness and the non-uniformity.
We detail these considerations as a technical report in Appendix \ref{app:nusampling} for the interested reader.

\section{EXPERIMENTS} \label{sec:experiments}
We conducted these experiments to answer three questions:
(1) How does the line search influence SDCA?
(2) How do the non-uniform sampling schemes compare with each other?
and (3) How does SDCA compare with SAG and OEG on sequence prediction?

\subsection{EXPERIMENTAL SETTING}

We applied the experimental setup outlined by \citet{schmidt2015non}.
We implemented SDCA to train a classifier on four CRF training tasks: (1) the optical character recognition (OCR) dataset~\citep{taskar2004max}, (2) the CoNLL-2000 shallow parse chunking dataset (CONLL), (3) the CoNLL-2002 Dutch named-entity recognition dataset (NER), and (4) a part-of-speech (POS) tagging task using the Penn Treebank Wall Street Journal data.
Additional details regarding these datasets are provided in Table~\ref{datasets summary}.
Note that the tasks (2), (3), (4) are about language understanding.
They use sparse features (the ratio $a/A$ from the table is small).
The sparsest data set is NER.
Note that POS is considerably larger than other datasets.
All experiments are performed with a regularization factor $\lambda=1/n$.
We used our own implementation\footnote{The code to reproduce our experiments is available at: \url{https://remilepriol.github.io/research/sdca4crf.html}.} of SDCA coded in plain Python and Numpy \citep{walt2011numpy}.
In most plots we report the logarithm base 10 of the primal sub-optimality.
We got the optimum by running L-BFGS a large number of iterations.

\begin{table}[t]
	\caption{Dataset summary.
	$d$ is the dimension of $\bw$.
	$n$ is the number of data points (sequences).
	$N$ is the number of nodes (e.g. sum of sequences length).
	$K$ is the number of possible labels for each node.
	$A$ is the number of attributes (see Appendix~\ref{app:feature}).
	$a$ is the maximum number of attributes extracted from one node.
	Mem. is the memory required by the pairwise marginals stored as float 64.
	The pairwise marginals dominate the memory cost.}
	\label{datasets summary}
	{\small
		\begin{tabular}{lllll}
			\toprule
			Dataset          &   OCR  &   CONLL   &    NER   &      POS \\
			\midrule
			$d$ &   \convert{4082} &   \convert{1643026} &  \convert{2798955} &  \convert{8572770} \\
			$n$ &   \convert{6202} &      \convert{8936} &    \convert{15806} &    \convert{38219} \\
			$N$ &  \convert{52827} &    \convert{211727} &   \convert{202931} &   \convert{912273} \\
			$K$ &     26 &        22 &        9 &       45 \\
			$A$ &    128 &     \convert{74658} &   \convert{310983} &   \convert{190458} \\
			$a$ &    128 &        19 &       20 &       13 \\
			Mem.(GiB)      &    0.2 &       0.7 &      0.1 &       13 \\
			\bottomrule
		\end{tabular}
	}
	\end{table}

\subsection{EFFECT OF THE LINE SEARCH}\label{experiment line search}

We implemented the safe bounded Newton-Raphson method from \citet[Section 9.4]{press_numerical_1992} on the derivative of the line search function.
A natural question to ask is : how precise should the line search be?
The stopping criterion for this algorithm is the size of the last step taken so there is no proper precision parameter.
We refer to this stopping criterion for the line search as the sub-precision of SDCA.

We discovered experimentally that the convergence of SDCA is mostly  independent of the sub-precision.
On all datasets, if we ask 0.01 sub-precision or less, SDCA converges with the same rate.
An explanation is that the accuracy of the optimization arises from iterates $\balpha$ and $\hat \alpha(\hat w(\balpha))$ getting closer to each other in the simplex with each iteration.

Reaching 0.01 or 0.001 takes on average 2 iterations.
Each iteration of Newton's method require the computation of the first and second derivative of the line search objective~\eqref{crf line search}.
In the following we report results with sub-precision 0.001 to be on the safe side.
These 2 iterations were taking about 30\% of the algorithms running time for each dataset.\footnote{
We also tried initializing the line search with 0.5 or with the previous step size.
There was no significant difference.
}

We also performed experiments with only one step of the Newton update.
The convergence was not affected on OCR, CONLL and POS, but convergence failed on NER (see Figure~\ref{fig:subprecision} of Appendix~\ref{app:comp_plots}).
This phenomenon could be related to sparsity.

\subsection{COMPARISON OF SAMPLING SCHEMES} \label{experiment sampling}
We compare the performance of four sampling strategies with 20\% of uniform sampling against the full Uniform approach, on the OCR dataset (see results in Figure~\ref{fig:comparison sampling schemes}):\vspace{-2mm}
\begin{itemize}
  \item \textit{Importance:} sample proportionally to the smoothness constants $L_i = \lambda +\frac{R_i}{n}$. We report how we evaluated the radii $R_i$ in Appendix \ref{app:radius}. \vspace{-2mm}
  \item \textit{Gap:} sample proportionally to our current estimate of the duality gaps.\footnote{
  For the gap approaches, we initialize the gap estimates with large values (100) so as to perform a pass over the whole dataset before starting to sample proportionally to the stale estimates.
  } \vspace{-2mm}
  \item \textit{Gap $\times$ importance:} sample proportionally to the product of the gap and smoothness constants. \vspace{-2mm}
  \item \textit{Max:} sample deterministically the variable with the largest recorded gap \citep{dunner2017efficient}.
\end{itemize}
\begin{figure}[t]
\centering \includegraphics[width=.33\textwidth]{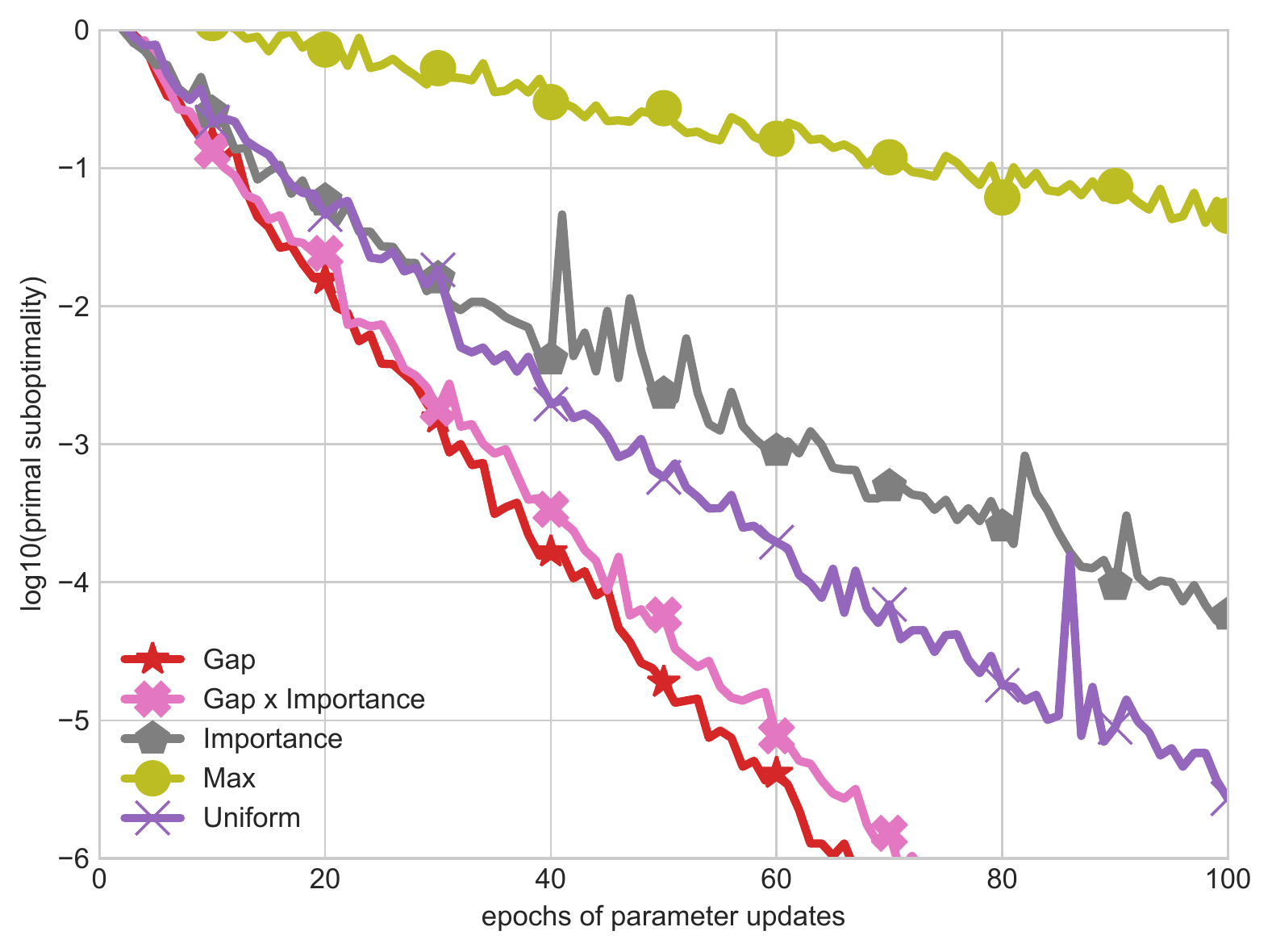}
\caption{Performance of competing sampling schemes on the OCR dataset with 80\% of non-uniformity. Sampling proportionally to the gap gives the best performance.}
\label{fig:comparison sampling schemes}
\end{figure}

As discussed in Section~\ref{ssec:gap_sampling}, Max sampling is not robust enough to the staleness of the gap estimates and fails to converge here.
We also observe that Importance performs worse than Uniform, and that Gap $\times$ Importance performs worse than Gap.
This  indicates that the smoothness upper bounds we estimated are not informative of the  difficulty of optimizing a point for SDCA.
Overall, Gap sampling  gives the best performance and this is what we use in the following experiments.

The ratio of uniform sampling is here to mitigate the fact that we sample proportionally to stale gaps.
This is the strategy adopted by SAG-NUS~\citep{schmidt2015non} which samples uniformly half of the time.
Another strategy used by~\citet{osokin2016minding} is to update all the duality gaps at once every 10 epochs or so.
Our experiments indicate that these strategies are not needed for SDCA-GAP.
Increasing the ratio of non-uniformity up to 1 only improves the performance on all datasets, though after 0.8 the improvements are marginal, as illustrated by Figure~\ref{fig:non-uniformity} for the NER dataset.
\begin{figure}[t]
\centering \includegraphics[width=.33\textwidth]{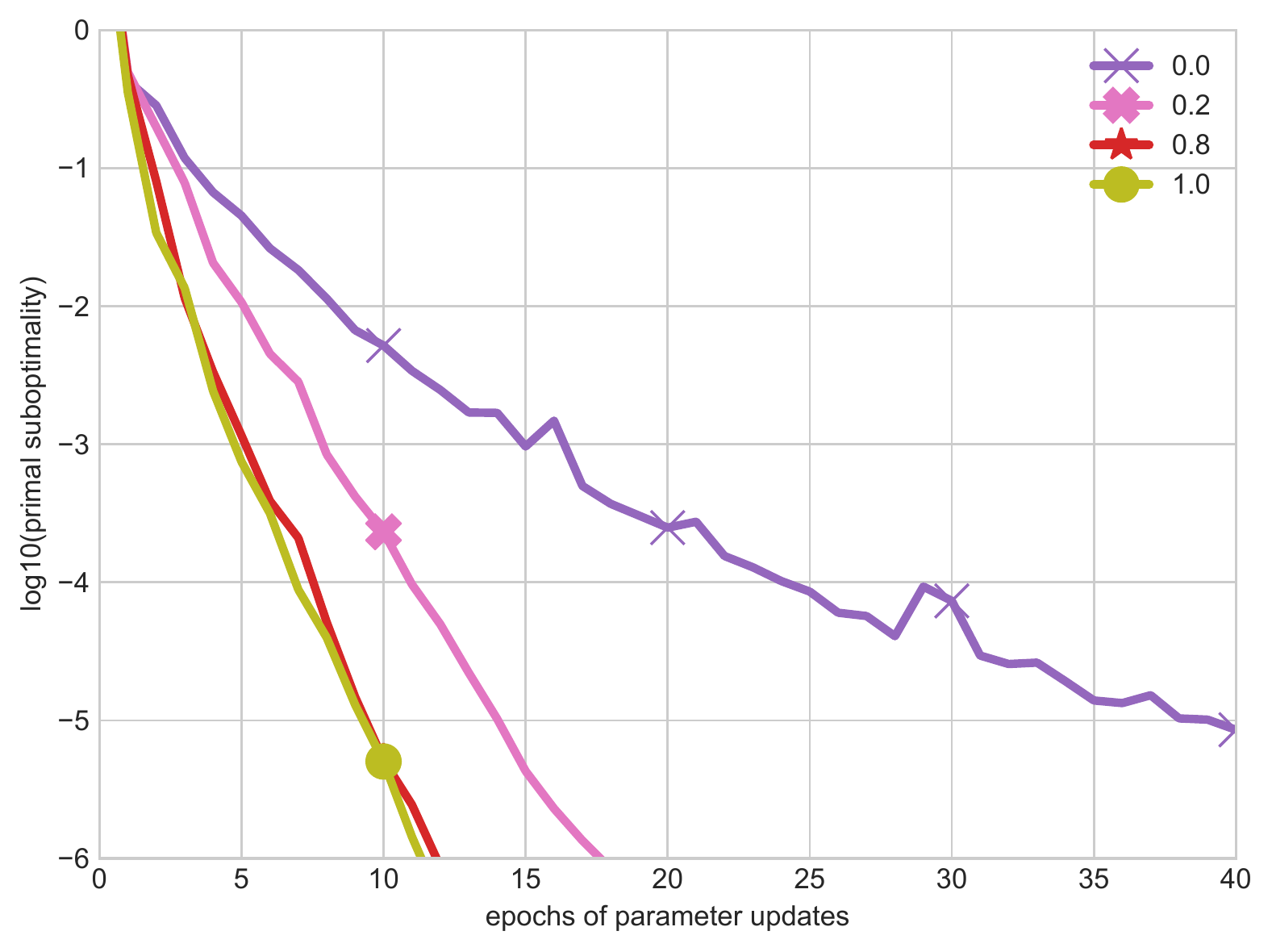}
\caption{SDCA with Gap sampling applied on NER with various fractions of non-uniform sampling, as indicated by the number in the legend.
Increasing the fraction only improves the performance, up to a certain point.}
\label{fig:non-uniformity}
\end{figure}

In fact, the estimate of the total gap maintained by SDCA is somewhat accurate, as illustrated for different datasets in Figure~\ref{fig:ratio} of Appendix~\ref{app:comp_plots}.
Empirically, it always remains within a factor 2 of the true duality gap.
This accuracy is a good news because one can use this estimate of the duality gap as a stopping criterion for the whole algorithm.
Once it reaches a certain precision threshold, one just has to perform one last batch update to check the real value.
This is similar in spirit to SAG, which uses the norm of its estimate of the true gradient as a stopping criterion.
Both are duality gaps estimators (see Equation~\eqref{primal duality gap}).

\subsection{COMPARISON AGAINST SAG AND OEG}

We downloaded the code for OEG and SAG-NUS as implemented by~\citet{schmidt2015non} from the SAG4CRF project page.\footnote{\url{https://www.cs.ubc.ca/~schmidtm/Software/SAG4CRF.html}}
We used our own implementation of SDCA with a line search sub-precision of $0.001$.
We provide the comparison in Figure~\ref{fig:comparison with SAG and OEG} according to two different measures of complexity which are implementation independent.

\begin{figure*}[t]
\centering
\begin{subfigure}{0.33\linewidth}
\centering
\includegraphics[width=\linewidth]{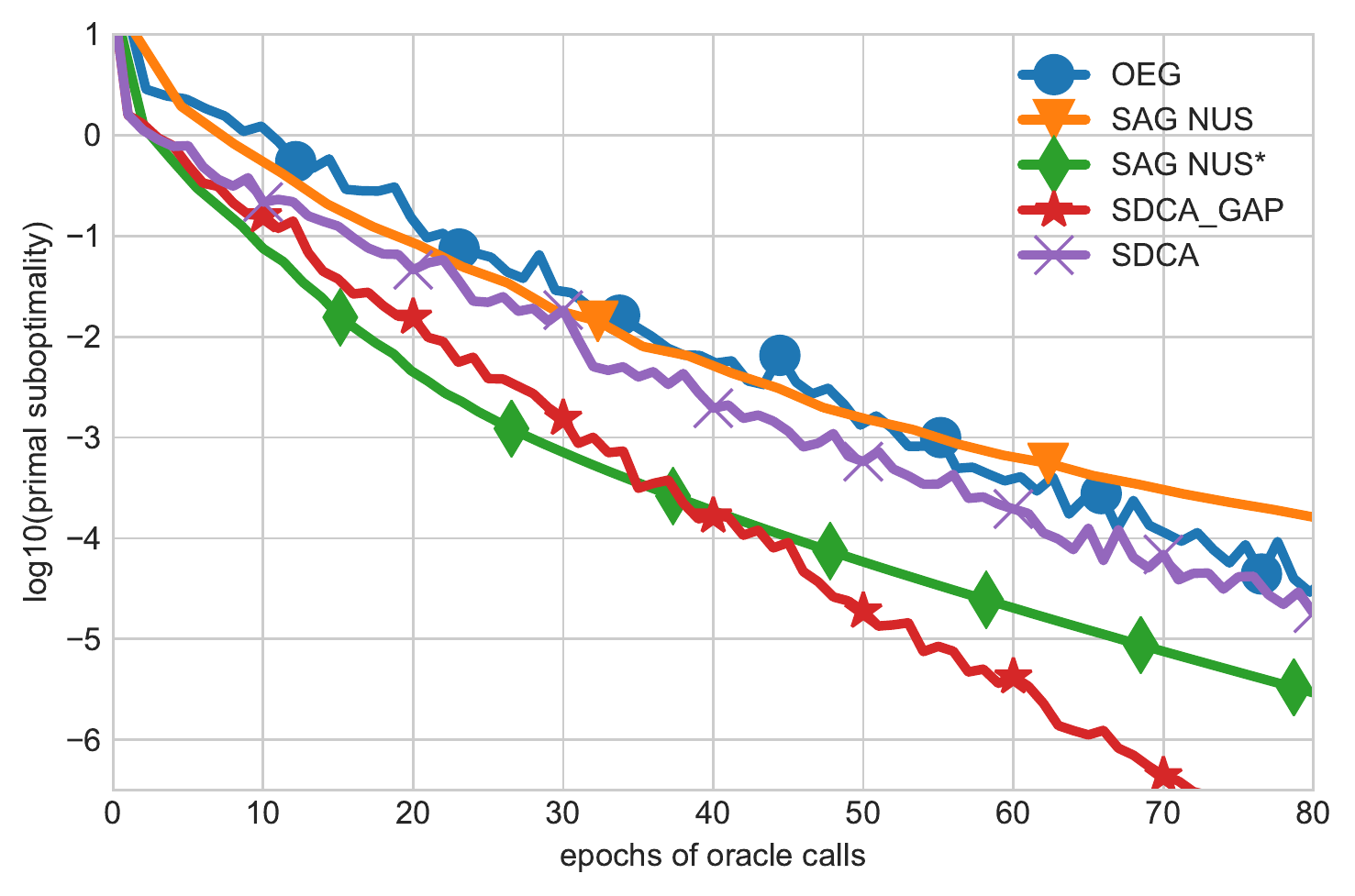}
\caption{OCR (Oracle Calls)}\label{fig:OCR oracles}
\end{subfigure}%
\begin{subfigure}{0.33\linewidth}
\centering
\includegraphics[width=\linewidth]{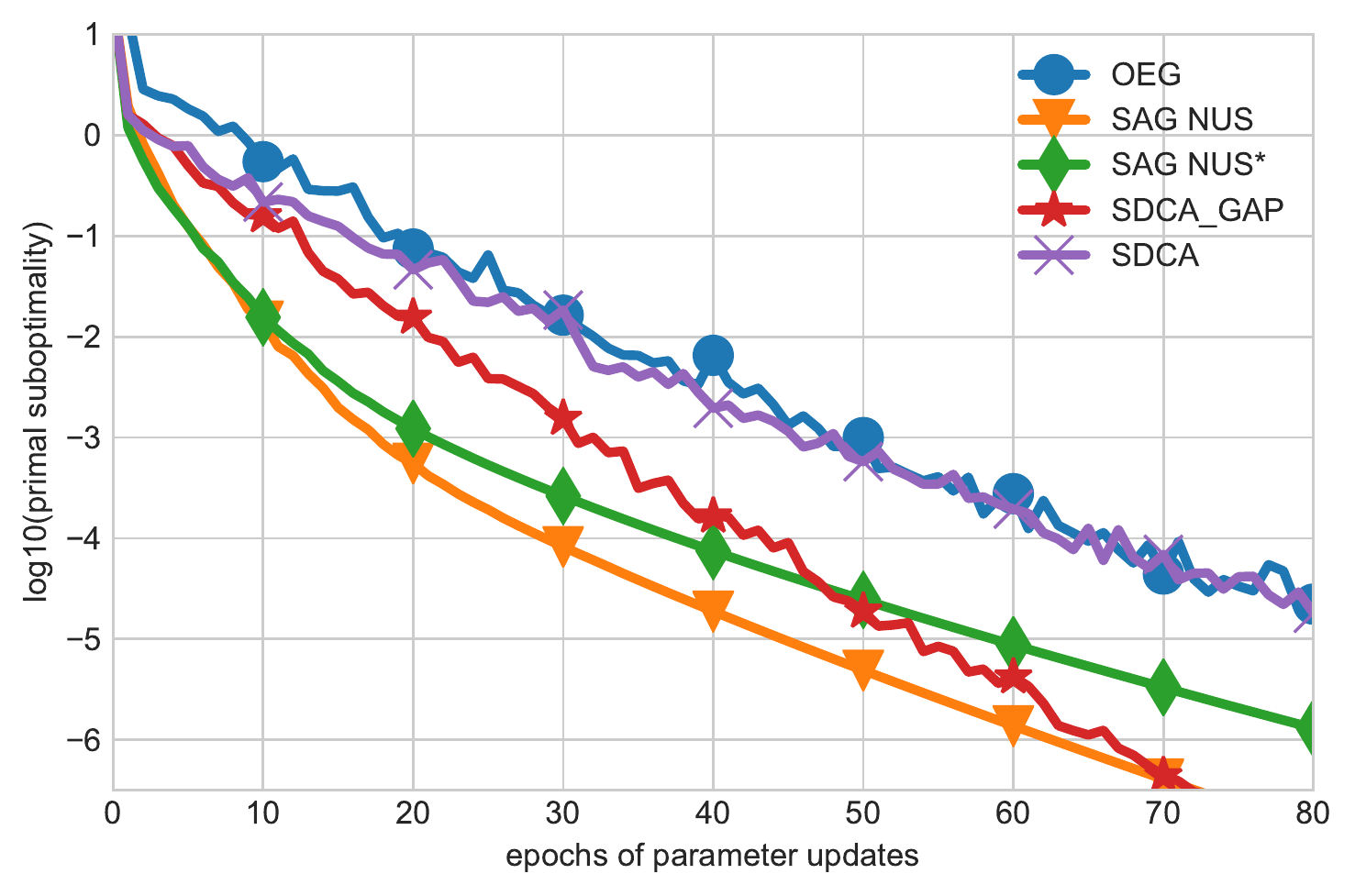}
\caption{OCR}\label{fig:OCR}
\end{subfigure}%
\begin{subfigure}{0.33\linewidth}
\centering
\includegraphics[width=\linewidth]{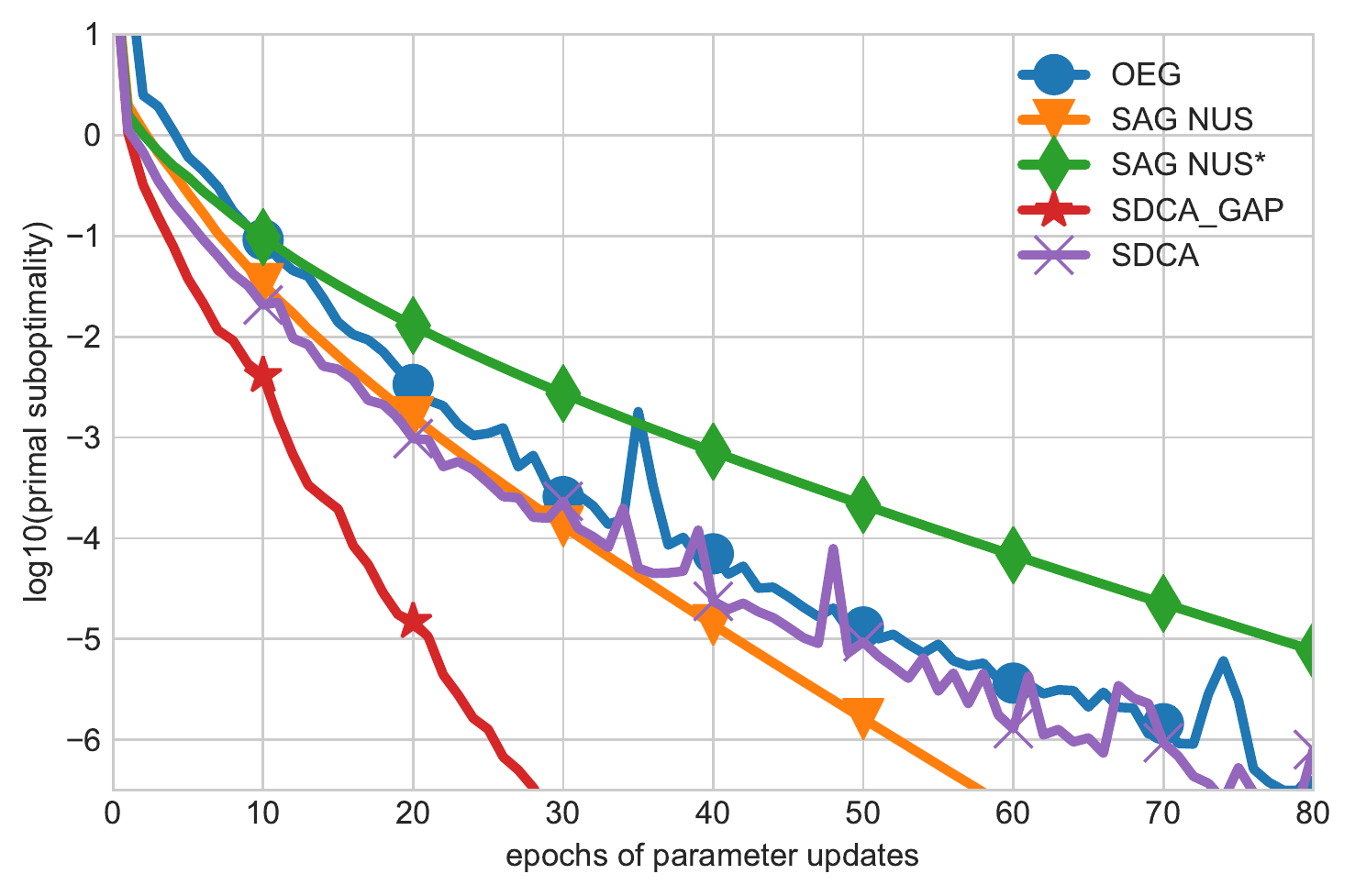}
\caption{CONLL}\label{fig:CONLL}
\end{subfigure} \\
\begin{subfigure}{0.33\linewidth}
\centering
\includegraphics[width=\linewidth]{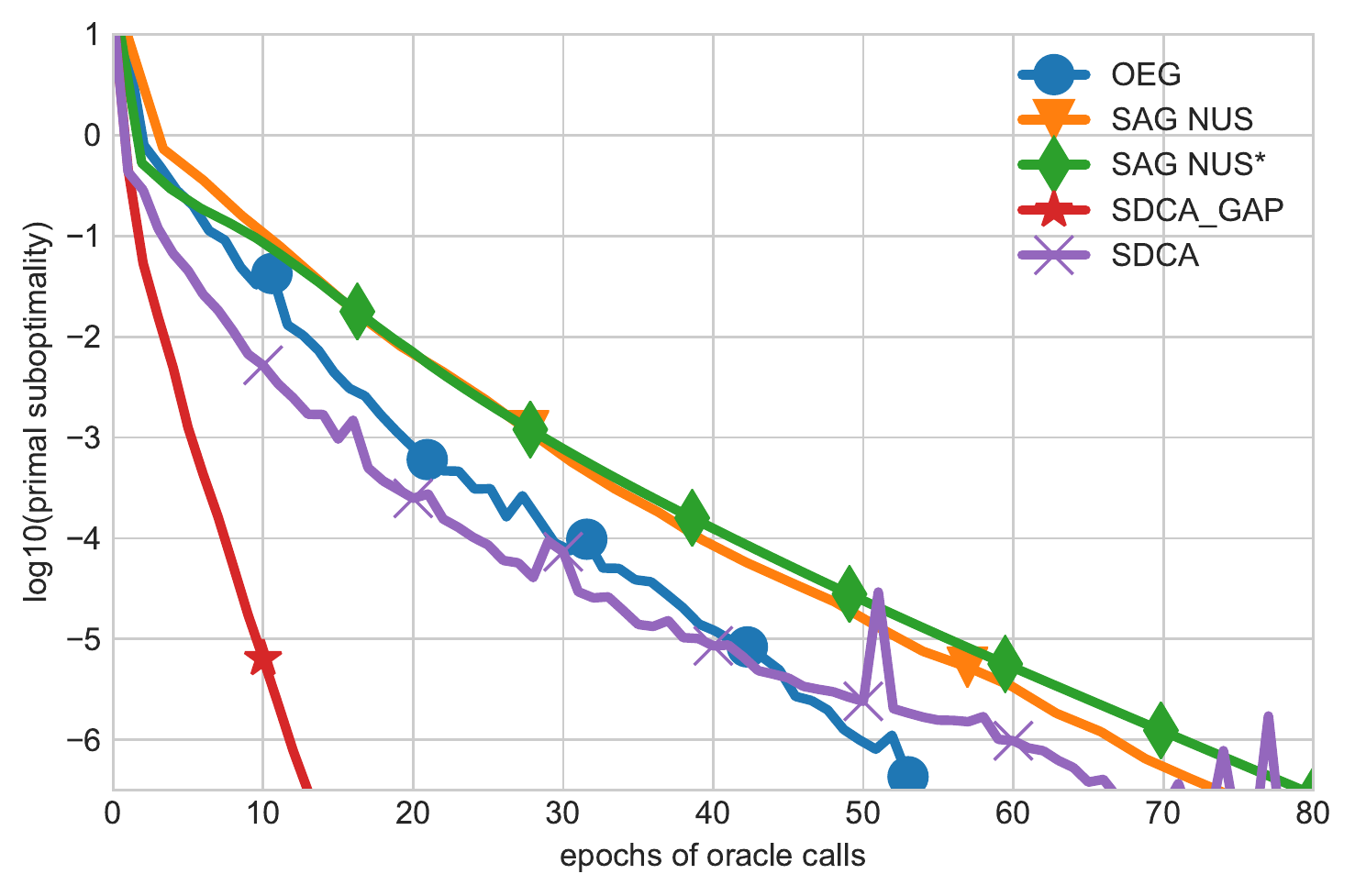}
\caption{NER (Oracle calls)}\label{fig:NER oracles}
\end{subfigure}%
\begin{subfigure}{0.33\linewidth}
\centering
\includegraphics[width=\linewidth]{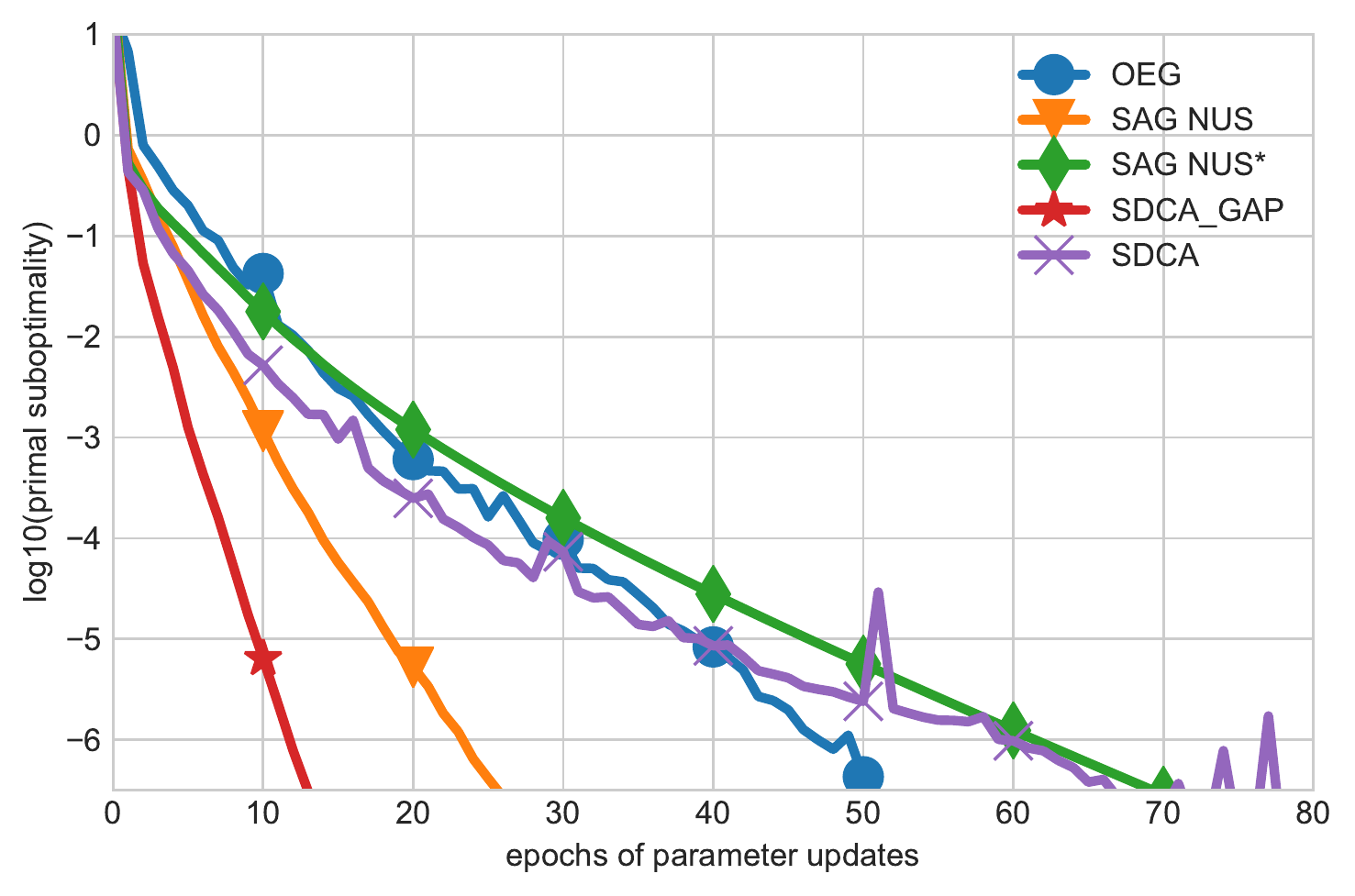}
\caption{NER}\label{fig:NER}
\end{subfigure}%
\begin{subfigure}{0.33\linewidth}
\centering
\includegraphics[width=\linewidth]{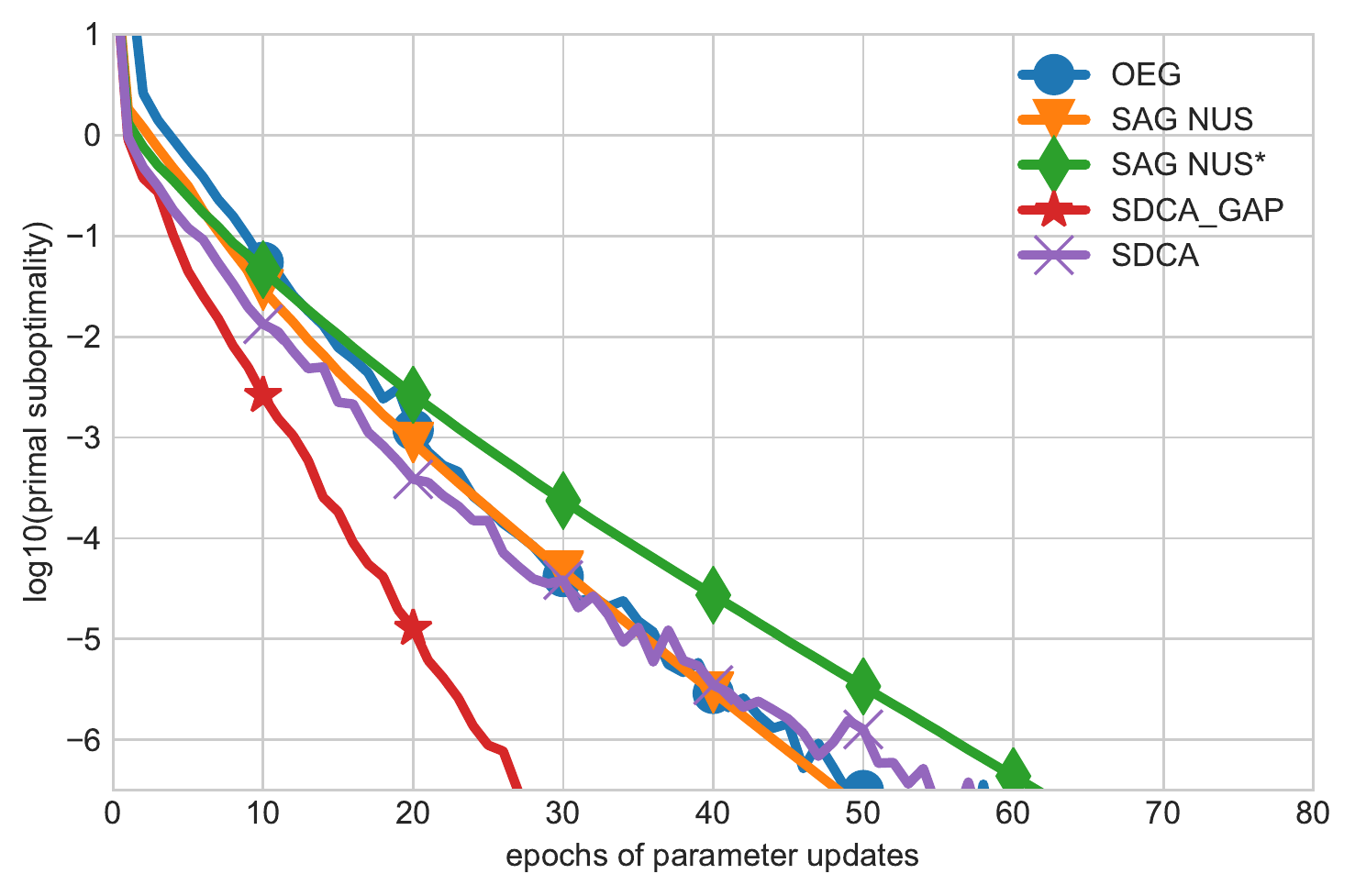}
\caption{POS}\label{fig:POS}
\end{subfigure}%
\caption{Primal sub-optimality as a function of the number of oracle calls (left) or parameters updates (center and right).
	SDCA refers to uniform sampling.
	SDCA-GAP refers to sampling Gap sampling 80\% of the time.
	SAG-NUS performs a line search at every iteration.
	SAG-NUS* implements a line-search skipping strategy.
	It appears worse than SAG-NUS when we look at the number of updates, which hides the cost of the line search.
}
\label{fig:comparison with SAG and OEG}
\end{figure*}

\paragraph{Oracle calls.}
\citet{schmidt2015non} compared the algorithms on the basis of the number of oracle calls.
We report these on OCR and NER in Figures~\ref{fig:OCR oracles} and~\ref{fig:NER oracles}.
Results on the other datasets are in Figure~\ref{fig:oracle_calls} in Appendix~\ref{app:comp_plots}.
This metric was suitable for the methods they compared.
Both OEG and SAG-NUS use a line search where they call an oracle on each step.
SDCA does not need the oracle to perform its line search.
However the oracle is message passing on a junction tree.
It has a cost proportional to the size of the marginals.
Each iteration of the line search require computing the entropy of these marginals, or their derivatives.
These costs are roughly the same.
Comparing the number of oracle calls for each method is thus unfairly advantaging SDCA by hiding the cost of its line search.
It becomes a relevant comparison when a marginalization oracle becomes much more expensive than approximating the entropy (see the discussion in Section~\ref{sec:discussion}).
When this cost is hidden, SDCA-GAP is on par with SAG-NUS* on OCR and it is much faster on the sparse datasets.

\paragraph{Parameter updates.}
To give a different perspective, we report the log of the sub-optimality against the number of parameter updates in Figures~\ref{fig:OCR}, \ref{fig:CONLL}, \ref{fig:NER} and~\ref{fig:POS}.
This removes the additional cost of the line search for all methods.\footnote{
This is a penalty for SAG-NUS* which enforces a line-search skipping strategy.
}

We observe that uniform SDCA and OEG need roughly the same number of  parameters update on all four datasets.
When we add the adaptive gap sampling, SDCA outperforms OEG by a margin.
On OCR, SDCA and SDCA-GAP do not perform as well as SAG-NUS.
On the three other datasets, SDCA-GAP needs less iterations.
In fact, the more sparse the dataset, the less iterations are needed.

This is likely explained by SDCA's ability to almost perfectly optimize each block separately due to its line search method.
More specifically, as the datasets become sparser, the prediction between data points becomes less and less correlated (i.e. the label distribution for two points that share no attributes will not influence each other directly through their primal weights).
In settings where no points share any attributes (completely sparse), all methods optimize each point independently.
SDCA  may perform very well thanks to its precise line search.

In terms of test error, SDCA is on par with SAG, and a bit better than OEG.
All methods reach maximum accuracy after a few epochs.
We report the evolution of the test error in Figure~\ref{fig:test_err} of Appendix~\ref{app:comp_plots}.

Comparing the number of parameters updates also has a disadvantage.
It penalizes methods with line search skipping strategies likes OEG and SAG.
The running time is highly implementation dependent and providing a fair comparison is non-trivial.
We focused on implementation independent comparisons.
SCDA, SAG and OEG have many common operations: the oracle, the computation of the scores and the primal  direction.
The fact that the line search took only 30\% of SDCA's runtime indicates that the conclusion drawn from the number of updates may hold for other metrics.

\section{DISCUSSION} \label{sec:discussion}

In this work, we investigated using SDCA for training CRFs for the first time.
The observed empirical convergence per parameter update was similar for standard SDCA and OEG.
However, SDCA can be enhanced with an adaptive sampling scheme, consistently accelerating its convergence and also yielding faster convergence than SAG with non-uniform sampling on datasets with sparse features.
It would be natural to also implement a gap sampling scheme for OEG, though several quantities needed for the computation are not readily available in standard OEG and would yield higher overhead in actual implementation.
We leave finding a more efficient implementation of a gap sampling scheme for OEG as an interesting research direction.

A key feature of SDCA is to only require one marginalization oracle per line-search.
This could become advantageous over SAG or OEG when the marginalization oracle becomes much more expensive than evaluating the entropy function from the marginals.
Examples for this scenario include: when a parallel implementation is used for the entropy computation; or when the marginalization oracle uses an iterative approximate inference algorithms such as TRW BP whereas an approximation of the entropy is direct from the marginals~\citep{kirshnan15barrierFW}.
Investigating these scenarios with full timing comparison (which is implementation dependent) is a further interesting direction of future work.

We also note that acceleration schemes have been proposed for both SAG and SDCA~\citep{lin2015catalyst, shalev2016accelerated}, though they have not been tested yet for training CRFs.

\subsubsection*{Acknowledgments}
We are thankful to Thomas Schweizer for his numerous software engineering advices.
We thank Gauthier Gidel and Akram Erraqabi who started this project.
We are indebted to Ahmed Touati for his involvement during the first phase of the project.
Alexandre Piché was supported by the Open Philantropy Project.
This work was partially supported by the NSERC Discovery Grant RGPIN-2017-06936.

\renewcommand{\refname}{\subsubsection*{References}}
\bibliographystyle{abbrvnat}
\bibliography{../optimization}


\clearpage
\onecolumn
\appendix

\section{IMPLEMENTATION} \label{app:sec:implementation}
We discuss some practical aspects of SDCA: initialization, memory requirement and how to do the line search.

\subsection{INITIALIZATION}\label{app:ssec:initialization}

As discussed in~\citet{schmidt2015non}, the initialization of dual methods for CRFs can influence significantly their performance.
We describe here a motivation for a suggested good initialization for $\balpha$.
Suppose that we put all the mass for $\alpha_i$ on the ground truth label $y_i$, i.e. $\alpha_i = \delta_{y_i}$ where $\delta_y$ is the Kronecker delta function on $y$ -- this represents the ``empirical distribution'' on one example.
Let $\bm\delta$ be the concatenation $(\delta_{y_i})_{i=1} ^n$.
Similarly, let $\bu$ be the concatenation of the uniform distribution on the labels for each training example.
We have the following chain of relationships:
\begin{align*}
	\bm\delta & \quad \xrightarrow{~\hat w~} & \bm{0} & \quad \xrightarrow{~\hat \alpha~} & \bu & \quad \xrightarrow{~\hat w} \dots \\
	\cD(\bm\delta)=0 &  & \text{small } \cP(\bm{0}) &  & \cD(\bu) &
\end{align*}

What is important here is that $\cP(\bm{0})$ is small.
If each node can take up to $K$ values, and there are $n$ sequences for a total of $N$ nodes, $\cP(\bm{0}) = \frac{N}{n}\log(K)$.
On all our datasets this is below 100.
This means that using $\bm{\alpha}^{(0)}=\bm\delta$ gives an initial duality gap equal to $\cP(\bm{0})\lesssim10^2$.
In contrast, using $\bm{\alpha}^{(0)} = \bu$ as used in the original OEG code\footnote{\texttt{egstra-0.2} available online at \url{http://groups.csail.mit.edu/nlp/egstra/}. This is also the initialization used in the main text of~\citet{schmidt2015non}.} consistently gave extremely large $\hat w(\bu)$ resulting in a large negative dual score and large primal score, and raising numerical stability issues.
Primal methods usually initialize their weights to zero.
The dual counter part is the empirical distribution because it yields the same primal vector and score.
For these reasons, we ideally would like to use $\bm\delta$ as the initialization.

There is catch though. On the borders of the simplex, the entropy has infinite gradient and curvature.
This is a bad behavior if we wish to use this information for the line search.
A natural strategy to mitigate this effect is to take a (small $\epsilon$) convex combination with the uniform:
\begin{equation} \label{eq:initialization:app}
	\balpha^{(0)} := \varepsilon \bu + (1-\varepsilon) \bm\delta \, .
\end{equation}
This is what we use in our experiments.
Graphically, the initial point will be on a segment between a corner of the simplex and the center.
This is the same initialization that \citet[App.~D of the Sup.~Mat.]{schmidt2015non} discovered empirically. It was also used implicitly by~\citet{collins2008exponentiated} when they took the regularization path approach by starting the method with a very large regularization parameter~$\lambda$.

\subsection{MEMORY REQUIREMENT}

Variance reduced methods use memory (except SVRG) to control the variance of the update.
This memory cost can be quite large as it grows linearly with the size of the dataset.
\citet{schmidt2015non} suggested a smart way to reduce this memory cost for  SAG : for a sequence with hand crafted features, one stores only the unary marginals and the binary features.
There is no such trick for dual methods, and both OEG and SDCA have to store the full marginals.
It turns out that if each node can take $K$ values, we have to allocate about $K$ times more memory than for SAG.
This can become a problem: for our larger dataset, part of speech tagging on Penn-Tree Bank Wall-Street Journal, we needed about 15GiB of RAM.

\subsection{LINE SEARCH} \label{app:sec:implementation line search}
The line search is an important part of the algorithm.
Each evaluation of the line search function or its derivatives is quite expensive.
We need to aggregate values from the whole marginal which has a size $\sum_c | \cY_c |$ (though this can be done in parallel).
As a comparison, running the sum-product algorithm over the junction tree has a cost $2\sum_c | \cY_c |$ (though this is a sequential algorithm).
There are other overhead in the algorithm such as computing the scores $\bw^T F_c(x, y_c)$ or estimating the primal direction $A_i \delta_i$, so this is not totally critical.

Yet we wish to reduce the number of function evaluation.
A good way to do so is to use the Newton-Raphson algorithm.
But this uses the first and second derivatives of the line search objective, and the entropy has infinite slope and curvature on the borders of the simplex.
To avoid numerical instability issues, we have to use and store the logarithm of the marginals (as was done for OEG~\citep{collins2008exponentiated}).
We report an empirical study of the line search performance in section \ref{experiment line search}.

\section{DESCRIPTION OF THE FEATURE MAP $F$}
\label{app:feature}

\begin{wrapfigure}{r}{3.5cm}
\centering
    \includegraphics[height=7cm]{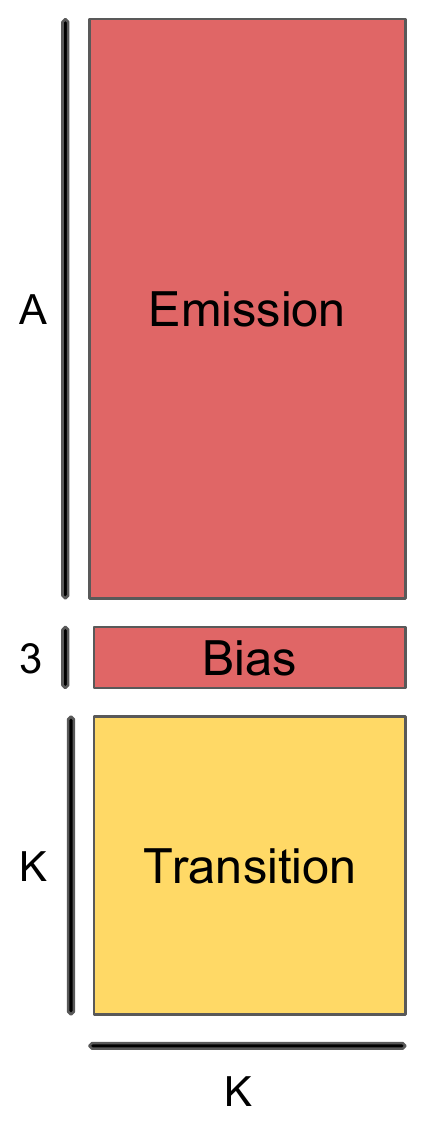}
    \caption{Sketch of the feature map. K is the number of different labels for one node. A is the number of attributes.}
    \label{fig:sketch feature map}
\end{wrapfigure}

The feature map has the same structure on all the data sets (cf Figure~\ref{fig:sketch feature map}).
We first draw the distinction between unary features (in red) and binary features (in yellow). The features can be written as the sum of the unary and binary features:
\begin{equation*}
	F(x, y) = \sum_{t=1}^T F_t(x_t, y_t) +  \sum_{t=1}^{T-1} F_{t, t+1}(y_t, y_{t+1}).
\end{equation*}
Unary Features depend only on the label of one node $y_t$ and the corresponding data point~$x_t$: $F_t(x_t, y_t)$.
Binary features depend only on the labels of two neighboring nodes : $F_{t, t+1}(y_t, y_{t+1})$.
It is a design choice not to directly model the relationship between two neighboring data points, e.g. $F(x_t, x_{t+1}, y_t, y_{t+1})$.
In practice the binary features simply count the number of transitions between $y_t$ and $y_{t+1}$, hence the yellow square.

For unary features, it is a bit more complex.
For each data sequence $x$, we extract an embedding for each position $t$, $\varphi(x, t)$.
For OCR, it is simply the 128 pixels image itself $\varphi(x, t) = x_t$.
For the language tasks, it is a count of the appearance of certain attributes, e.g, what is the word $x_t$, what are the words at position $t-1$, $t+1$, and so on.
A complete list of the attributes is available at \url{http://www.chokkan.org/software/crfsuite/tutorial.html}.
For each word (=node), between $13$ and $20$ features are extracted depending on the dataset.
In total the number of different attributes extracted ranges from $73,000$ to $300,000$, hence the sparsity of the features.
 We denote $A$ the number of attributes, or alternatively the size of the embedding.
 For each node with point $x_t$ and label $y_t$, $F_t(x_t, y_t)$ puts the embedding $\varphi(x, t)$ in the column indexed by $y_t$ of the red emission matrix.
 In this same column, we add some bias.
The bias part has 3 dimensions.
The first component counts the appearance of the label.
The second component counts the appearance of the label in first position of a sequence, ($t=0$).
The last component counts the number of appearance in the last position of a sequence.

\section{HOW TO COMPUTE THE RADIUS OF THE FEATURES}
\label{app:radius}

We drop the i index for now. We look at the pair $(x, y)$. We want to evaluate an upper bound on:
\beq
R = \|A\|^2_{1\rightarrow 2}  = \max_{y\in\cY} \| \psi(y) \|_2^2 = \max_{\tilde y\in\cY} \| F(x, y) - F(x, \tilde y)\|_2^2.
\label{app:eq:radius}
\eeq
We are using the special nature of the features to estimate this radius.
Remark that in the standard feature maps that we used (Appendix~\ref{app:feature}), there is one column per label.
If the label $y_t$ is assigned to the node $t$, then all the features extracted from that node are inserted in the column associated to $y_t$.

How to build a $\tilde y$ maximizing the distance between features?
First we build the ground truth features : $F(x, y)$.
Then we look at the labels included in the sequence $y_t$.
In each data set, the $K$ labels never appear together in one sequence.
We find a label $z$ that does not appear in the original sequence.
Then a sequence $\tilde y$ maximizing the objective~\eqref{app:eq:radius} is the sequence composed only with that label $z$.

Why? There are two reasons.
First, $F(x, y) \geq 0 \, \forall (x, y)$ thus we want $F(x, y)$ and $F(x, \tilde y)$ to have disjoint supports such that the radius can be written as:
\beq
	R = \|F(x, y) \|^2 + \|F(x,\tilde y)\|^2.
\eeq
Second, we want to maximize $\|F(x,\tilde y)\|^2$.
We need to put all the weights on few coordinates, instead of dispersing it.
This is because we look at the $\ell^2$ norm.
For the $\ell^1$ norm there would be no difference.
By repeating only one label, we effectively concentrate all the weights in one column.

Following the steps described above, we can evaluate the radii for the whole data set.

\section{A CONVERGENCE BOUND ON THE DUALITY GAP}\label{app:bound duality gap}
It turns out that any algorithm with a convergence bound on the primal or the dual sub-optimality for problems~\eqref{eq:primal_problem} and~\eqref{dual problem}, can transpose it to a convergence bound on the duality gap.
That will be at the cost of a constant.
To transpose a result of the \textit{primal} sub-optimality to the duality gap, one can go by the norm of the gradient using the smoothness of~$\cP$, that we denote $L$:
\beq
 \cP(\bw) - \cP(\bw^*)\geq \frac{1}{2L}\|\nabla \cP(\bw) \|^2 \stackrel{\eqref{eq:gradientGap}}{=} \frac{\lambda}{L} g(\bw, \hat \alpha(\bw)).
\eeq
The first inequality above is a standard one from convex analysis for convex functions with Lipschitz-continuous gradients (see e.g. \citep[eq. (2.1.6)]{nesterov_introductory_2004}).
Whatever bound we get on the primal sub-optimality, we can translate it to the duality gap by losing a constant $L/\lambda \geq \kappa$, where $\kappa$ is the condition number.

To transpose a result from the \textit{dual} sub-optimality to the duality gap, one can use the uniform ascent lemma, Equation~\eqref{app:eq:uniform ascent lemma} from Appendix~\ref{app:proof}:
\beq
	\cD(\balpha^*)  - \cD(\balpha)
	\geq \E [\cD(\balpha^+)]  - \cD(\balpha) 
	\geq \frac{s}{n} g(\hat w(\balpha), \balpha)
\eeq
where the expectation is taken over the stochasticity of the update.
Let us look at this new constant.
We know that $1/s = 1+ \frac{R}{n \lambda \strgconvex}$.
We can relate it to the smoothness $L  \approx \lambda + \frac{R}{\strgconvex}$.
This time we lose a factor $n/s \approx n + \frac{L}{\lambda} \geq n + \kappa$.
For a well-conditioned problem ($n \gg \kappa$) this is much larger than the constant we lose from the primal to the gap.

\section{ADDITIONAL COMPARISON PLOTS}
\label{app:comp_plots}

We provide additional figures on the primal sub-optimality as a function of oracle calls (Figure \ref{fig:oracle_calls}), the test error as a function of epochs (Figure \ref{fig:test_err}), the  impact of reducing the precision of the Newton line-search (Figure \ref{fig:subprecision}) and the ratio between the estimate of the duality gap and the ground truth (Figure \ref{fig:ratio}).

\begin{figure}[H]
\centering
\begin{subfigure}{0.4\linewidth}
\centering
\includegraphics[width=\linewidth]{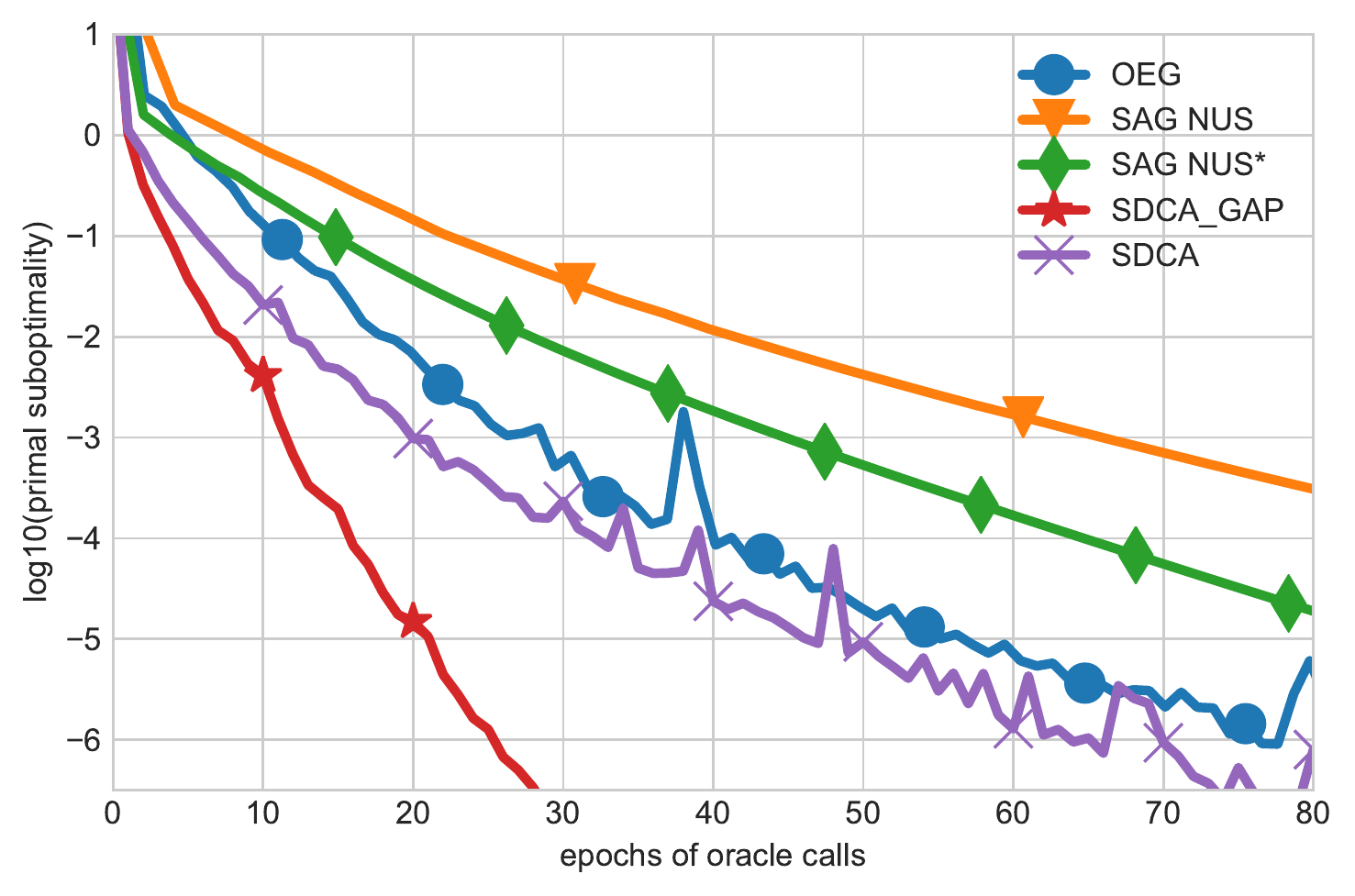}
\caption{CONLL}
\end{subfigure}
\begin{subfigure}{0.4\linewidth}
\centering
\includegraphics[width=\linewidth]{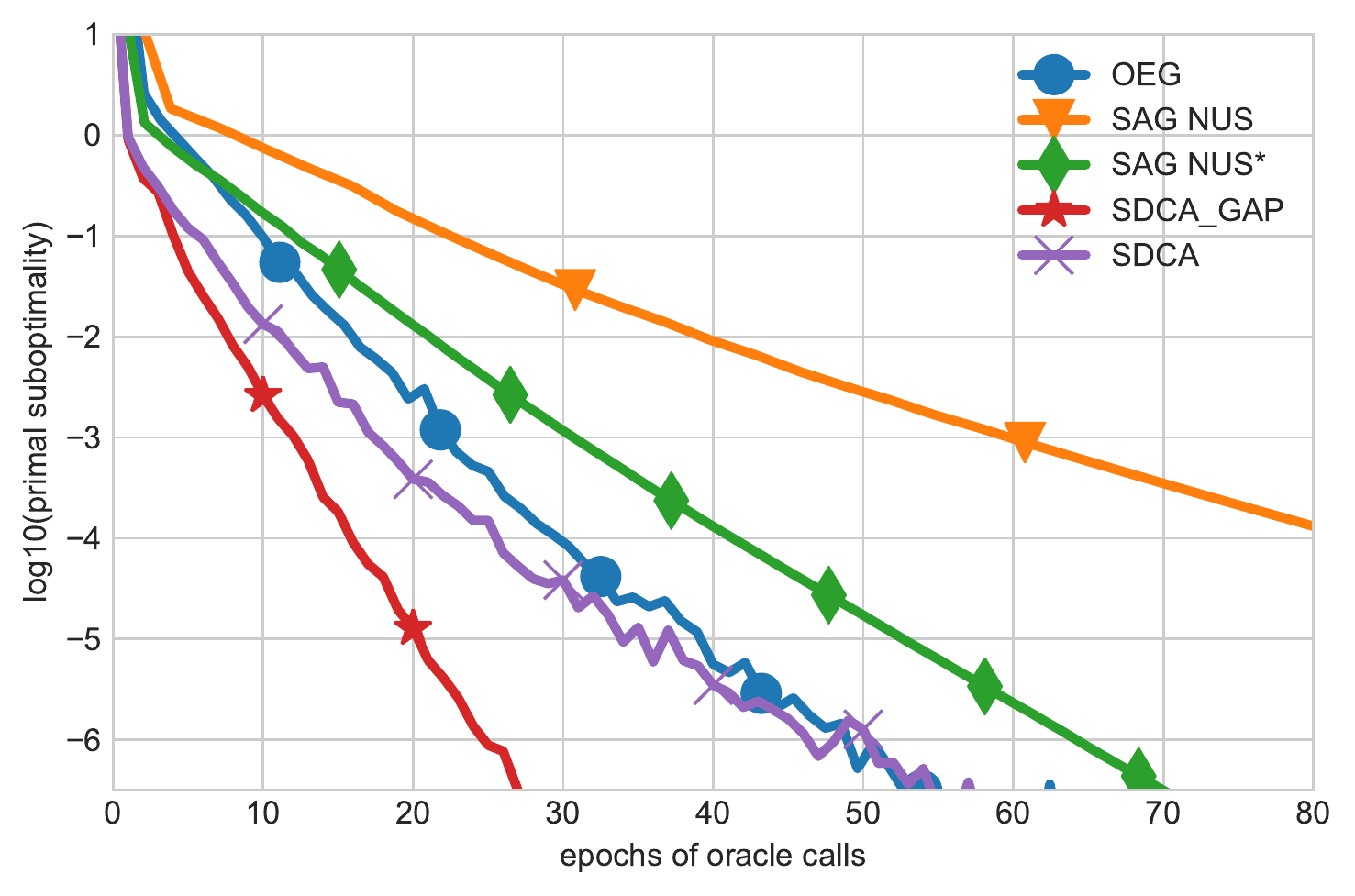}
\caption{POS}
\end{subfigure}%
\caption{Primal sub-optimality as a function of the number of oracle calls. SDCA-GAP performs much better than the competing methods for this metric partly because its line search does not require oracle calls.}\label{fig:oracle_calls}
\end{figure}

\begin{figure}[H]
\centering
\begin{subfigure}{0.4\linewidth}
\centering
\includegraphics[width=\linewidth]{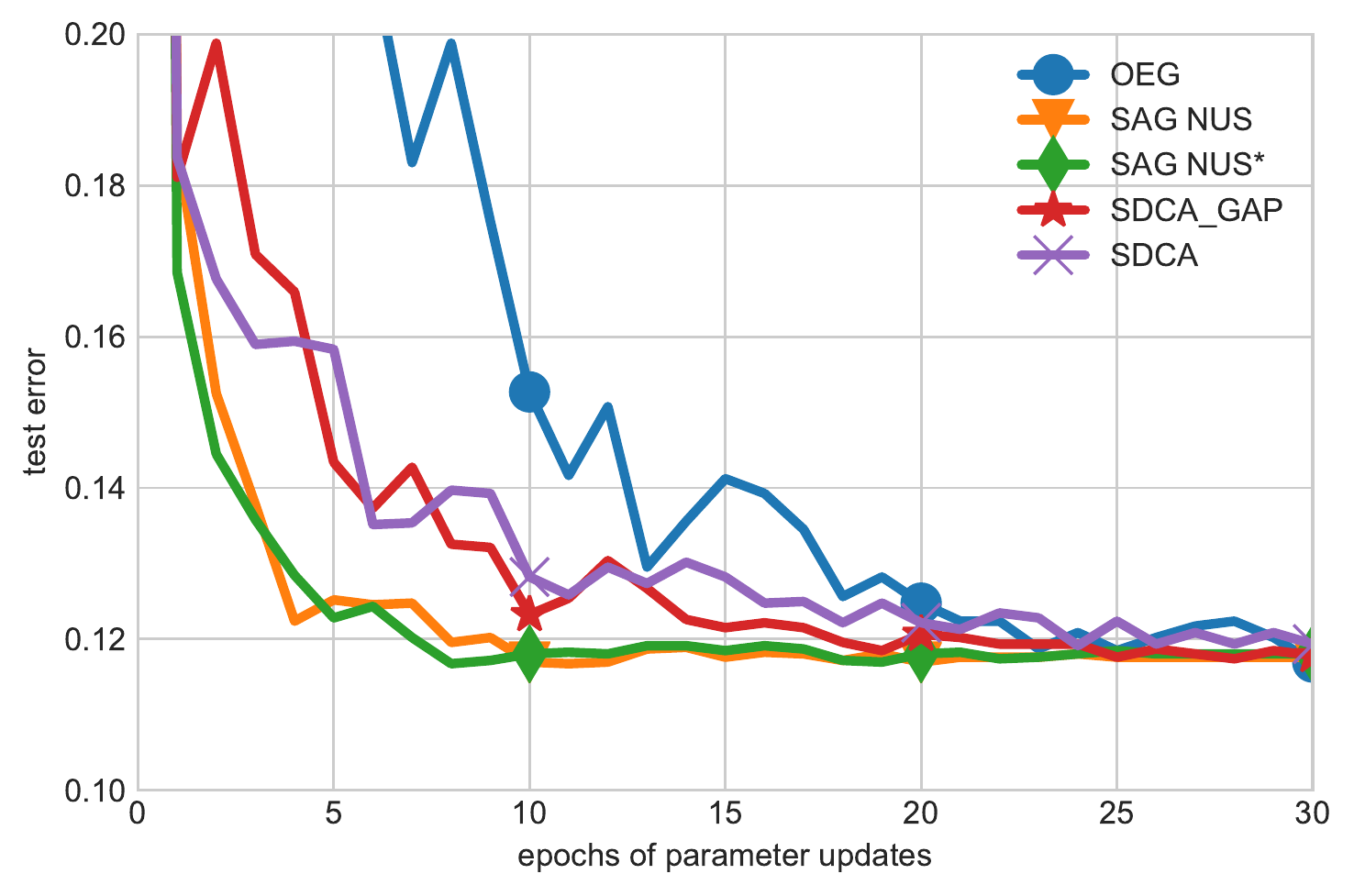}
\caption{OCR}
\end{subfigure}%
\begin{subfigure}{0.4\linewidth}
\centering
\includegraphics[width=\linewidth]{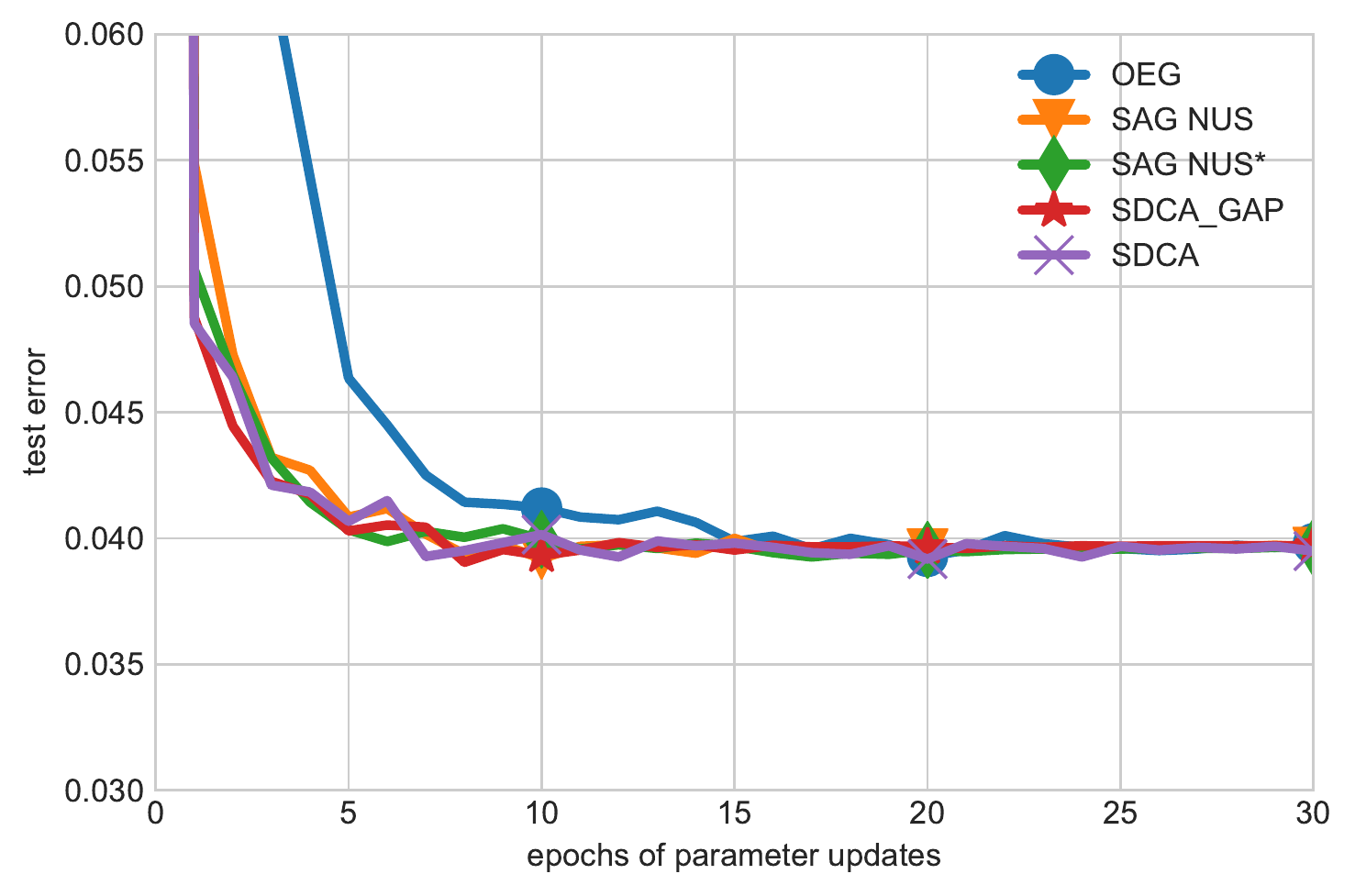}
\caption{CONLL}
\end{subfigure} \\
\begin{subfigure}{0.4\linewidth}
\centering
\includegraphics[width=\linewidth]{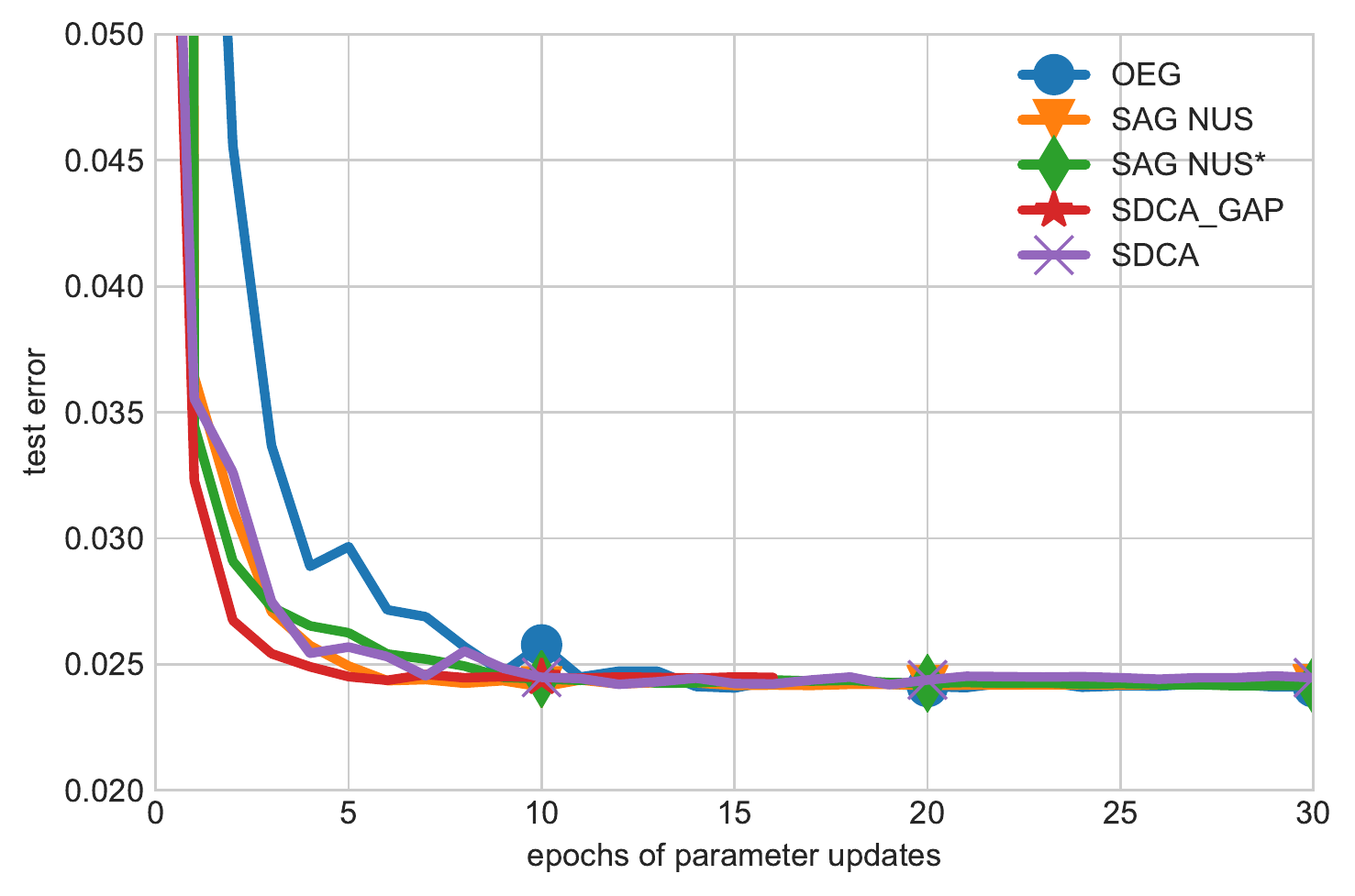}
\caption{NER}
\end{subfigure}%
\begin{subfigure}{0.4\linewidth}
\centering
\includegraphics[width=\linewidth]{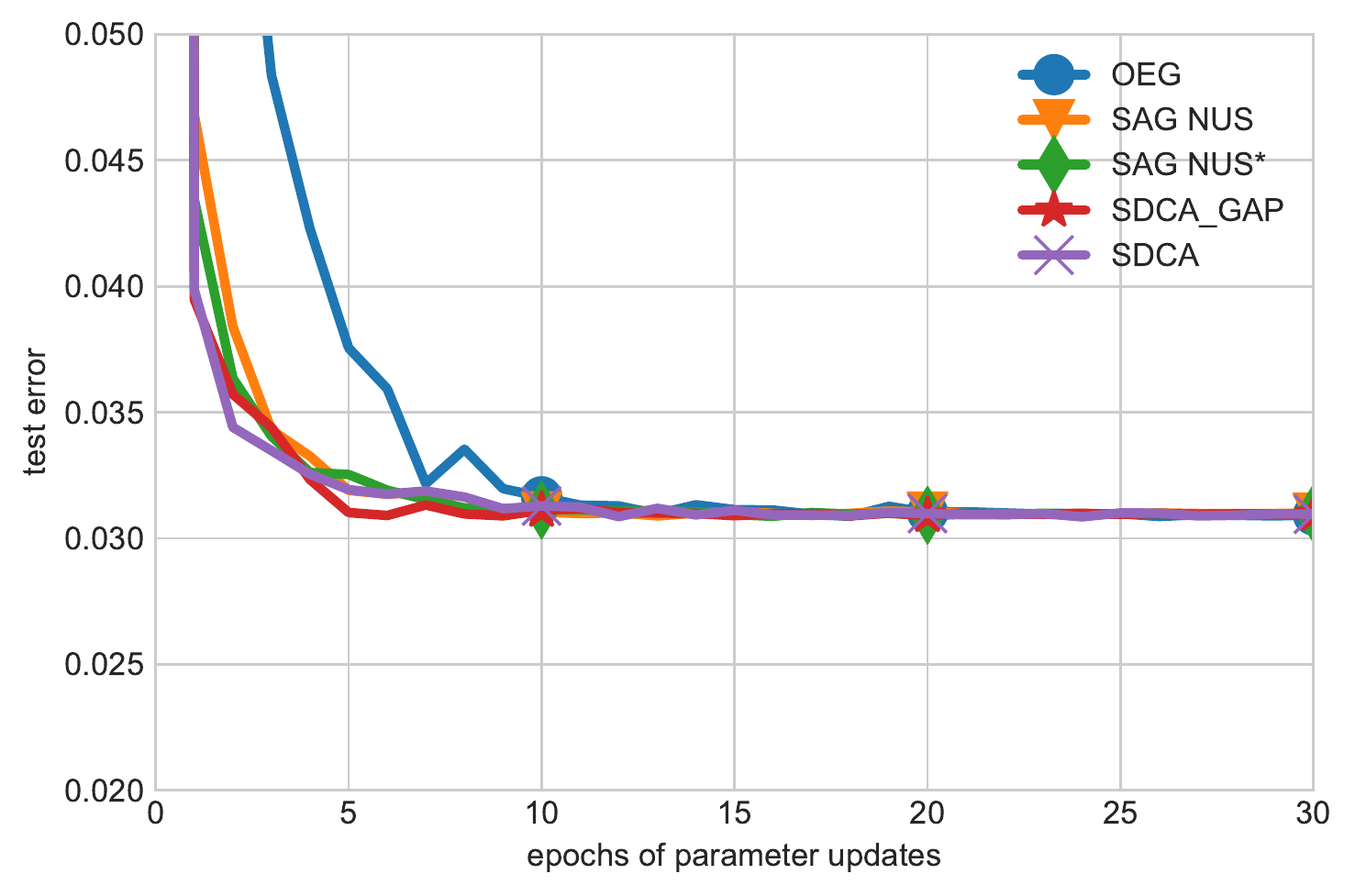}
\caption{POS}
\end{subfigure}%
\caption{Test error against number of epochs. Every methods reach the same test error. SDCA and SAG have the same convergence speed.}\label{fig:test_err}

\end{figure}

\begin{figure}[H]
    \centering
    \begin{subfigure}{0.4\linewidth}
        \centering
        \includegraphics[width=\linewidth]{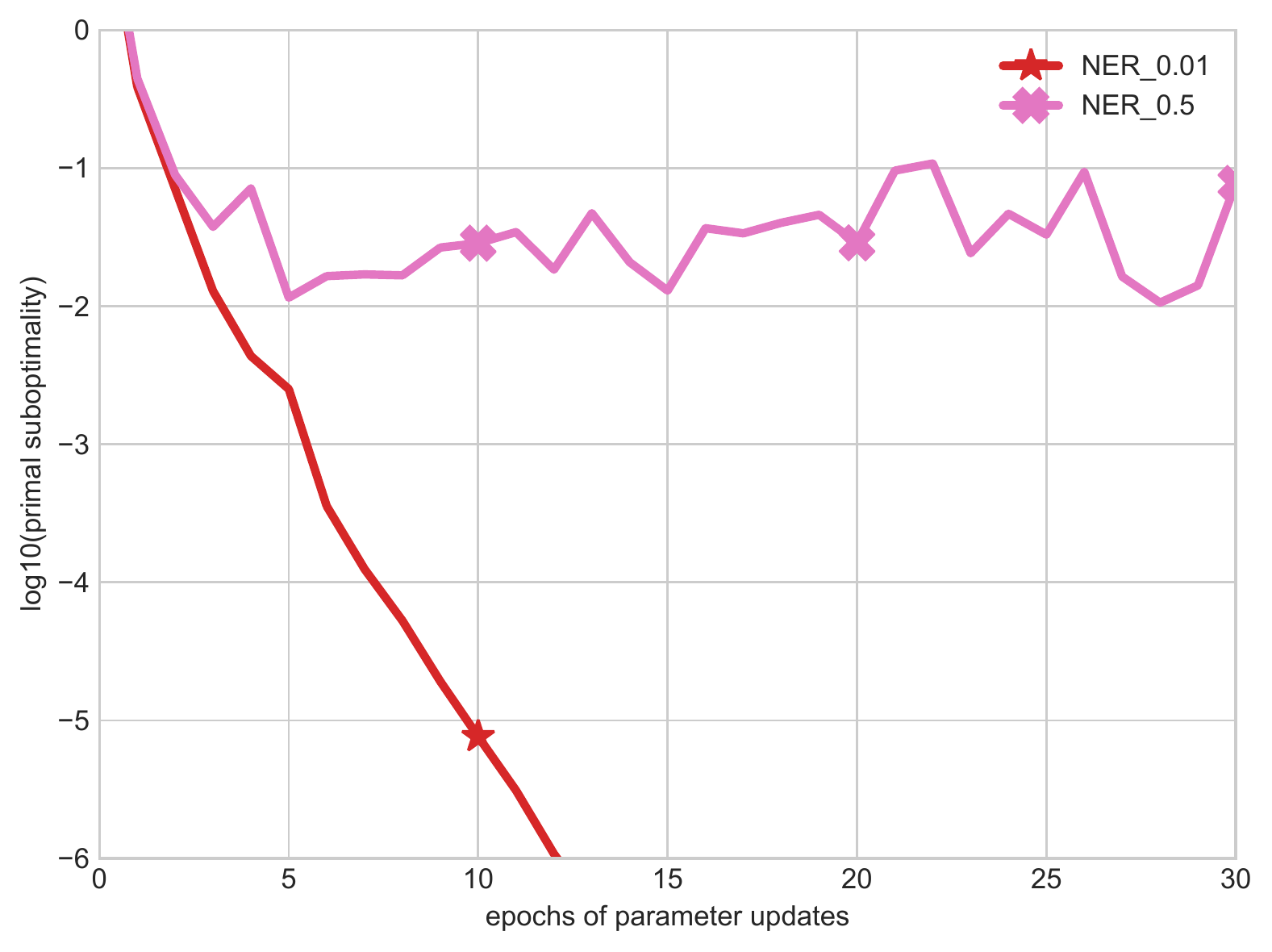}
        \caption{NER}
    \end{subfigure}
    \begin{subfigure}{0.4\linewidth}
        \centering
        \includegraphics[width=\linewidth]{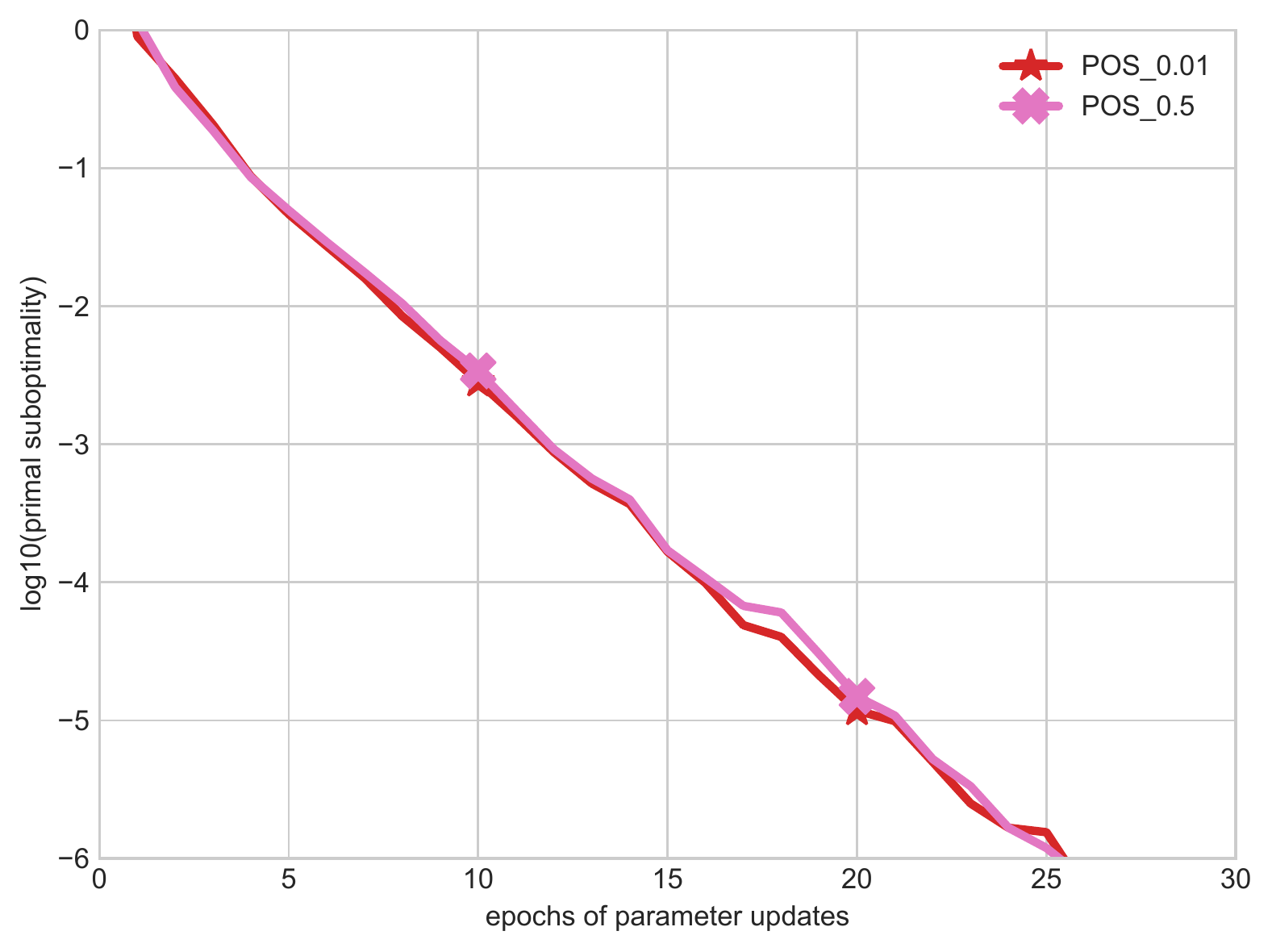}
        \caption{POS}
    \end{subfigure}
	\caption{
		Performance of SDCA on NER and POS with a Newton line-search.
		The number after the name of the dataset indicates the sub-precision we asked.
		A sub-precision of 0.5 effectively means that Newton stops after 1 step.
		While there is no difference between the curves for POS, 1 step of Newton update fails to converge on NER.
	}\label{fig:subprecision}
\end{figure}

\begin{figure}[H]
\centering
\begin{subfigure}{0.4\linewidth}
\centering
\includegraphics[width=\linewidth]{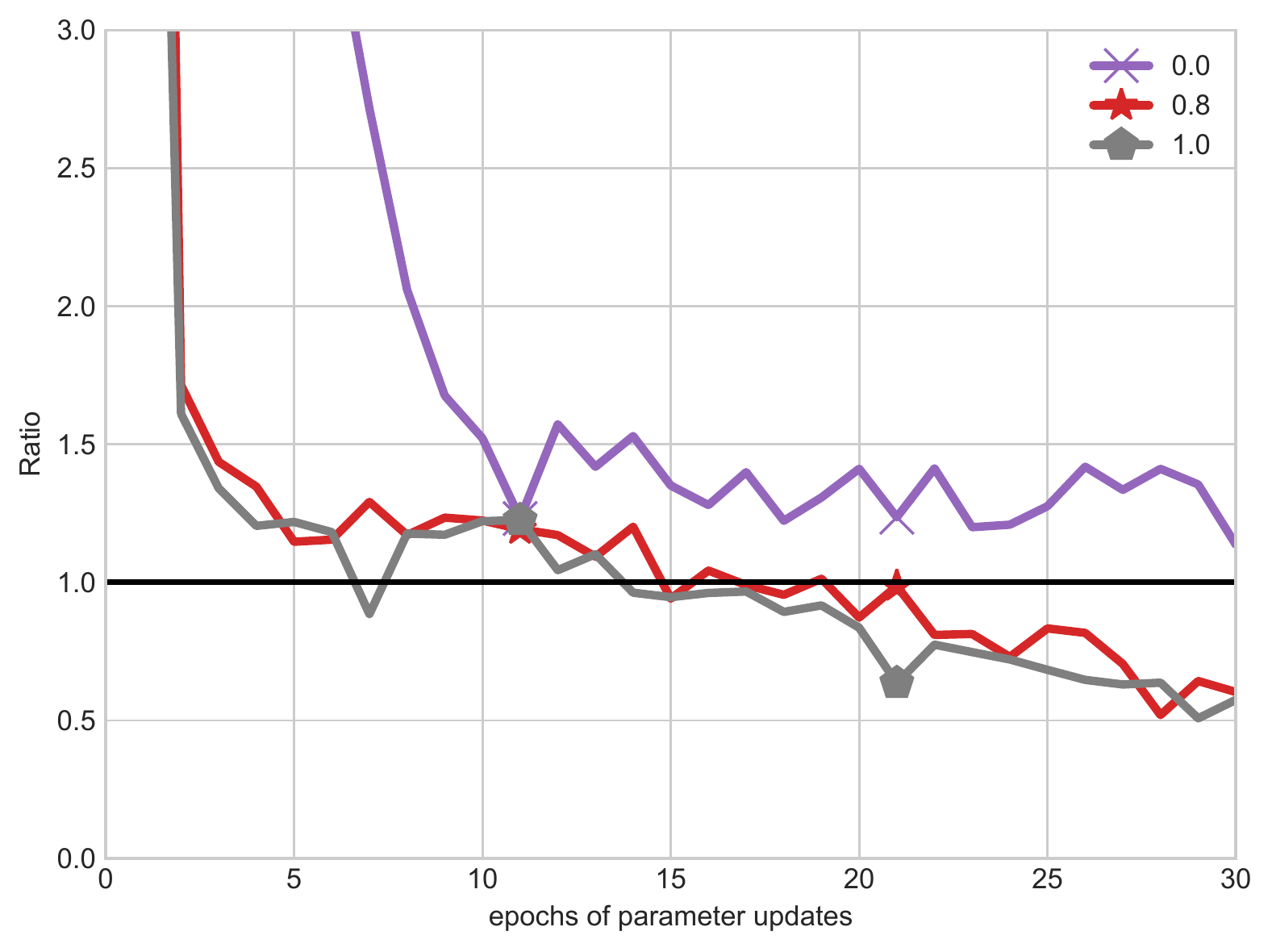}
\caption{CONLL}
\end{subfigure}
\begin{subfigure}{0.4\linewidth}
\centering
\includegraphics[width=\linewidth]{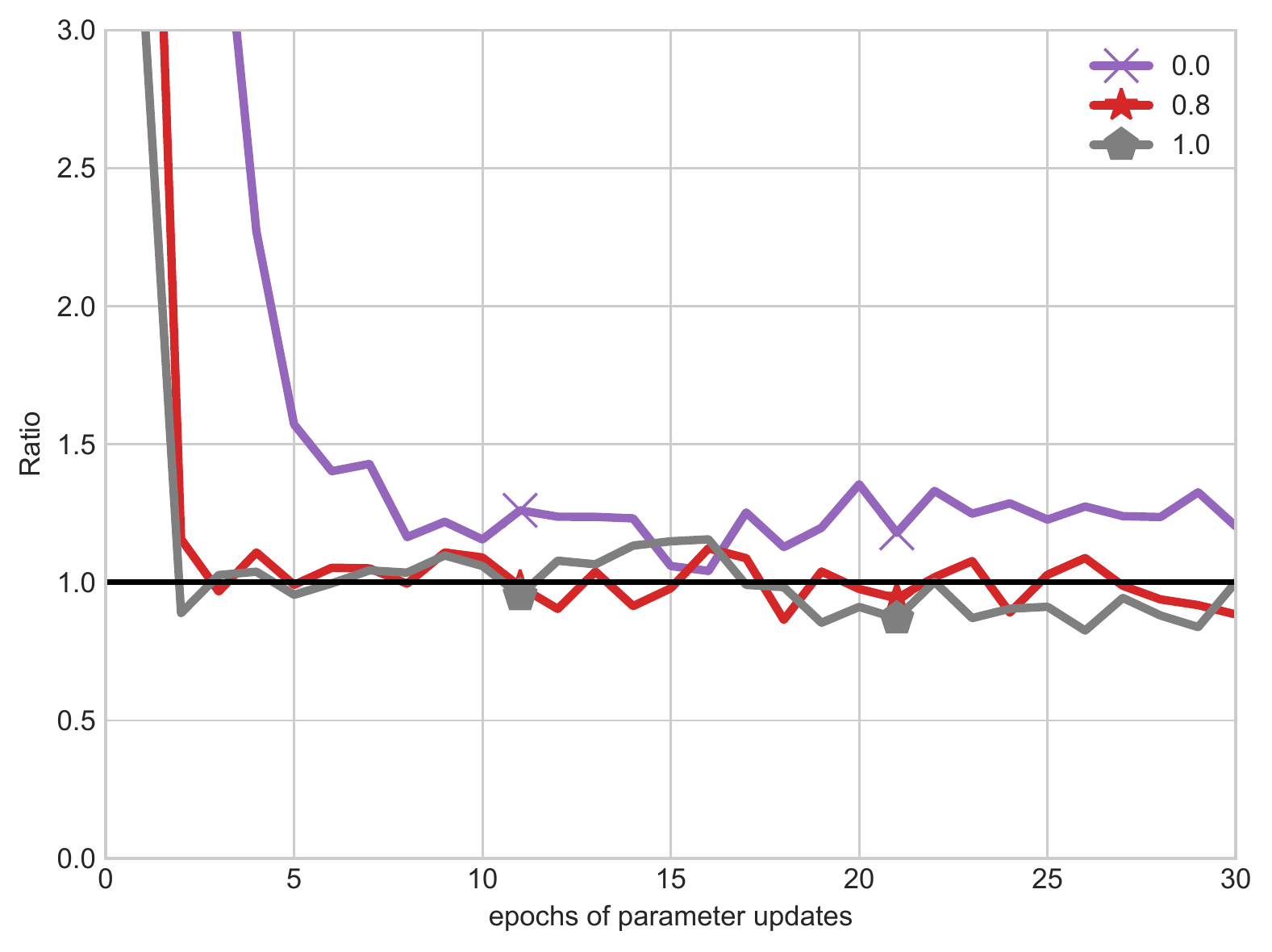}
\caption{OCR}
\end{subfigure}
\caption{The ratio between the estimate of the duality gap and the ground truth as a function of the proportion of non uniform sampling. The gap sampling tends to underestimate this value, whereas the uniform sampling tends to over-estimate it.}\label{fig:ratio}

\end{figure}

\section{A TECHNICAL REPORT ON NON-UNIFORM SAMPLING FOR STOCHASTIC DUAL COORDINATE ASCENT}
\label{app:nusampling}

In this section, we review the proofs of convergence of SDCA and its variants with importance and residual sampling.
Then we derive bounds on the convergence rate of two new sampling scheme for SDCA.
The first scheme samples proportionally to the duality gaps of each individual variable.
The second scheme is similar to the first one, but it corrects the duality gaps with the Lipschitz constant of the primal problem.

\subsection{SETTING}
We derive these bounds in a more general setting than the logistic regression, and we have to introduce some new notation.

Let $\bw$ denote the weights vector parameter, and $A_i$ the i-th features matrix.
Let $\phi$ be the primal loss function.
We suppose it is convex and $1/\strgconvex$-smooth with respect to $\|.\|_P$ (dual norm $\|.\|_D$).
The regularizer $r$ is supposed 1-strongly convex with respect to $\|.\|_{P'}$ (dual norm $\|.\|_{D'}$).
Because $\phi$ and $r^*$ are smooth, they are also differentiable. Note that every starred variable represent its dual conjugate.

The empirical loss minimization problem is:
\begin{equation}
	\label{app:primal problem}
	(P) \quad \min_{\bw \in \real^d} \lambda r(\bw) + \frac{1}{n} \sum_{i=1}^n \phi_i(-A_i^T \bw).
\end{equation}
Its Fenchel dual problem is:
\begin{equation}
	\label{app:dual problem}
	(D) \quad \max_{\balpha | \forall i, \alpha_i \in \dom \phi^*} -\lambda r^*(\hat v(\balpha)) - \frac{1}{n} \sum_{i=1}^n \phi_i^*(\alpha_i),
\end{equation}
with
\begin{equation}
	\label{app:dual to primal}
	\hat v(\balpha) := \frac{1}{\lambda n} \sum_i A_i \alpha_i \quad \textrm{and} \quad \hat w(\balpha) \in  \nabla r^* (\hat v(\balpha)).
\end{equation}
We also note:
\begin{equation}
	\label{app:primal to dual}
	\forall i, \beta_i = \hat \alpha_i(\bw) \in  \nabla \phi_i(-A_i^T \bw).
\end{equation}
Minimization of the empirical risk can often be interpreted as going around the diagram below.
\begin{displaymath}
    \xymatrix{ \bm w \ar[r] &   \nabla \phi_i(-A_i^T \bm w) \ar[d] \\
                \nabla r^* \big ( \frac{1}{\lambda n} A \balpha \big ) \ar[u] &  \balpha \ar[l] }
\end{displaymath}

We define the squared radius of the features for a given sample i as the operator norm of the matrix $A_i$:
\begin{equation}
	R_i := \|A_i\|^2_{D\rightarrow D'}.
\end{equation}
We also define the maximum squared radius as $R=\max_i R_i$ and the mean radius $\bar R = \frac{1}{n} \sum_i R_i$.

\paragraph{Log-likelihood special case.}
The loss $\phi(z)=\log(\sum_y \exp(z_y)) $ is 1-smooth with respect to the max-norm.
Its convex conjugate is the negative entropy $\phi^*(\alpha) = -H(\alpha) = \sum_y \log(\alpha_y) \alpha_y $ which is in turn 1-strongly convex with respect to the $\ell_1$-norm, and whose domain is the simplex.
We use the $\ell_2$ regularization whose dual function is itself.
We thus have $R_i =  \|A_i\|^2_{1\rightarrow 2} = \max_y \| \psi_i(y) \|_2^2$.
We also have a special expression for the primal to dual function $\beta_i = p(.|x_i; \bw) \propto \exp(-\bw^T \psi_i(.))$.
The dual variable is obtained as the conditional probability of the primal model.
Conversely, the primal weights are obtained as the expectation of the features $\psi_i(y)$, which are the columns of $A_i$.

\subsection{DUALITY GAPS}
We derive an interesting form on the duality gaps that support a new sampling strategy.
This is not needed to understand the convergence rates of SDCA and its variants, and the reader may skip this section.

The duality gap is:
\beq
g(\bw,\balpha) = P(\bw)-D(\balpha) = \lambda \left( r(\bw)+r^*(\frac{ A \balpha}{\lambda n}) \right) + \frac{1}{n}\sum_{i=1}^n \phi(-A_i^T\bw) +\phi^*(\alpha_i).
\eeq

Because of the two conjugate pairs $(r, r^*)$ and $(\phi, \phi^*)$ there are two apparent ways to simplify it.
One is to take the conjugate primal variable $\bw \vcentcolon= \hat w (\balpha)$, another is to take the conjugate dual variable $\balpha \vcentcolon= \hat \alpha(\bw)$.

\paragraph{Conjugate primal variable.}
Under the hypothesis $\bw = \hat w (\balpha)$, we obtain:
\beq
r(\bw)+r^*(\frac{ A \balpha}{\lambda n}) = \bw^T \frac{ A \balpha}{\lambda n}.
\eeq
The duality gap simplifies:
\beq
g(\hat w(\balpha), \balpha)
= \frac{1}{n}\sum_{i=1}^n \phi(-A_i^T \hat w(\balpha)) + \phi^*(\alpha_i) - \alpha_i^T (-A_i^T \hat w(\balpha))
= \frac{1}{n}\sum_{i=1}^n F_\phi(-A_i^T \hat w(\balpha), \alpha_i),
\eeq
where $F_\phi(s, \alpha)$ is the Fenchel duality gap~\eqref{eq:Fench} between vectors $s$ and $\alpha$.
When $\phi$ is the log-sum-exp, these vectors are the score (or logit) $s$ and the probability $\alpha$.
We want to simplify this further to directly relate $\balpha$ and its next iterate $\hat\alpha_i\circ\hat w(\balpha)$.
To do so we need another condition:
\beq
\langle \nabla \phi^* \circ \nabla \phi(s) - s , \beta - \alpha \rangle = 0,
\label{eq:condition_legendre_couple}
\eeq
for all $s\in \dom \phi$ and $\alpha, \beta \in \dom \phi^*$.
Geometrically, the pairs $(s, \nabla \phi^* \circ \nabla \phi(s) )$ should always be aligned orthogonally to $\dom \phi^*$.
This condition~\eqref{eq:condition_legendre_couple} is true whenever $\nabla \phi^* \circ \nabla \phi = \text{Id}$ the identity function.
It is also true when $\phi$ is the log-sum-exp although $\nabla \phi^* \circ \nabla \phi$ is not the identity.
Then the Fenchel duality gap is equal to the Bregman divergence generated by $\phi^*$:
\beq
F_\phi(s, \alpha) = D_{\phi^*}(\alpha || \nabla \phi (s)).
\eeq
Then the duality gap can be written as the average over data points of the $\phi^*$-Bregman divergence between $\alpha_i$ and its next fixed point iterate: $\hat\alpha_i\circ\hat w(\balpha)$:
\beq
g(\hat w(\balpha), \balpha)
= \frac{1}{n}\sum_{i=1}^n D_{\phi^*}(\alpha_i || \hat\alpha_i\circ\hat w(\balpha)).
\label{app:eq:dual_gap}
\eeq

\paragraph{Conjugate dual variable.}
The situation is quite symmetric.
Under the assumption that  $\balpha \vcentcolon= \hat \alpha(\bw)$, one gets:
\beq
g(\bw, \hat \alpha(\bw))
= \lambda \left ( r(\bw) + r^*(\frac{ A \hat \alpha(\bw)}{\lambda n})) - \bw^T\frac{ A \hat \alpha(\bw)}{\lambda n} \right )
= \lambda F_r(\bw, \frac{ A \hat \alpha(\bw)}{\lambda n}),
\eeq
where $F_r$ is the fenchel duality gap of the regularizer.
We can transform it into the Bregman divergence between $\bw$ and its next iterate $\bw' := \nabla r^*(\frac{ A \hat \alpha(\bw)}{\lambda n})) = \hat w \circ \hat \alpha(\bw)$ at the condition that:
\beq
\langle \nabla r \circ \nabla r^*(\bm v) - \bm v, \bw' -\bw \rangle = 0,
\eeq
for all vectors $\bm v$ in the domain of $r^*$ and all vectors $\bw, \bw'$ in the domain of $r$.
Then the duality gap is:
\beq
g(\bw, \hat \alpha(\bw))
= \lambda D_r(\bw ||  \hat w \circ \hat \alpha(\bw) ).
\label{app:eq:primal_gap}
\eeq

Equations~\eqref{app:eq:dual_gap} and~\eqref{app:eq:primal_gap} show that the objective~\eqref{app:primal problem} is also a fixed point problem for the conjugation operations.
The suboptimality can be easily measured as the divergence between a point, either primal or dual and its next iterate.
The divergence is given by the regularizer of the primal problem $r$ or the dual problem $\phi^*$.

\subsection{THEOREMS}
\label{app:theorem}

We state the convergence rates for some variants of SDCA using non-uniform sampling. The proofs follow in the next section.

Denote $h_t := D(\balpha^*) - \E[D(\balpha^{(t)} ]$ the expectation of the dual sub-optimality at step t.
The expectation is over all the possible samplings (the stochastic part of SDCA).
We will bound this value.
One can bound the duality gap $g(\hat w(\balpha),\balpha) := P(\hat w(\balpha)) - D(\balpha)$ at the cost of another constant outside of the exponential (Appendix~\ref{app:bound duality gap}).

\begin{theorem}[Uniform sampling \citep{shalev-shwartz_accelerated_2013-1}]
	\label{app:th:uniform}
	At each step, sample $i$ with uniform probability in $[1,n]$.
	After t iterations, the dual sub-optimality is bounded by:
	\begin{equation}
		h_t \leq (1-\frac{s}{n})^t  h_0,
	\end{equation}
	where $ s = (1+ \frac{R}{n \lambda \strgconvex} )^{-1} $ is the fixed step-size used in the proof.
\end{theorem}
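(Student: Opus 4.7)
The plan is to start from the per-example ascent lemma~\eqref{one point descent} (stated in Section~\ref{ascent lemma}) and take expectation over the uniform sampling. Concretely, sampling $i$ uniformly and taking the fixed step size $\gamma_i = s$, the expectation of the right hand side becomes
\[
    n\,\E\big[\cD(\balpha^+)-\cD(\balpha)\big]
    \;\geq\; \frac{s}{n}\sum_i g_i \;+\; \frac{s}{n}\sum_i\Big(\tfrac{1-s}{2}-\tfrac{s R_i}{2\lambda n\strgconvex}\Big)\|\beta_i-\alpha_i\|^2,
\]
where I have inserted the strong-convexity factor $\strgconvex$ coming from the general setting of Appendix~\ref{app:nusampling}. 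The guiding idea is to choose the step size so that the quadratic residual term is guaranteed to be nonnegative, so that only the Fenchel-gap term contributes, and then relate the sum of Fenchel gaps to the dual sub-optimality.

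First I would pick $s = (1+\tfrac{R}{n\lambda\strgconvex})^{-1}$, which is exactly the value solving $\tfrac{1-s}{2}=\tfrac{s R}{2\lambda n \strgconvex}$. Since $R_i \leq R$, the parenthetical coefficient in front of each $\|\beta_i-\alpha_i\|^2$ is then nonnegative for every $i$, and the residual term can be dropped from the lower bound. What remains is the clean inequality
\begin{equation}\label{app:eq:uniform ascent lemma}
    \E\big[\cD(\balpha^+)\big]-\cD(\balpha)
    \;\geq\; \frac{s}{n}\cdot\frac{1}{n}\sum_i g_i
    \;=\; \frac{s}{n}\, g\!\big(\hat w(\balpha),\balpha\big),
\end{equation}
using the block decomposition of the duality gap~\eqref{eq:Fench_blocks}. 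This is precisely the ``uniform ascent lemma'' referenced in Appendix~\ref{app:bound duality gap}.

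Next I would convert this per-step ascent into a contraction on the dual sub-optimality. By weak duality, $g(\hat w(\balpha),\balpha) \geq \cP(\hat w(\balpha)) - \cD(\balpha^\star) + (\cD(\balpha^\star)-\cD(\balpha)) \geq \cD(\balpha^\star)-\cD(\balpha)$, so subtracting $\cD(\balpha^\star)$ from both sides of~\eqref{app:eq:uniform ascent lemma} yields
\[
    \cD(\balpha^\star)-\E\big[\cD(\balpha^+)\big]
    \;\leq\; \Big(1-\tfrac{s}{n}\Big)\big(\cD(\balpha^\star)-\cD(\balpha)\big).
\]
Taking a further expectation over the past iterates and unrolling the recursion $t$ times gives $h_t \leq (1-s/n)^t h_0$, as claimed.

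The routine calculations are the choice of $s$ and the weak-duality step; the only real subtlety is the bookkeeping when the argument is lifted from the particular log-likelihood/$\ell_2$ case of Section~\ref{ascent lemma} to the general primal--dual template of Appendix~\ref{app:nusampling}: one must be careful that the norm in the residual term is the one in which the loss is smooth (and its dual is the one in which the regularizer is strongly convex), so that the smoothness constant $\strgconvex$ enters exactly as above and cancels $R$ at the chosen step size. Once that correspondence is set up, the rest is algebra.
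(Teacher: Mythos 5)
Your proof is correct and follows essentially the same route as the paper: apply the ascent lemma with uniform probabilities and the fixed step $s = (1+\tfrac{R}{n\lambda\strgconvex})^{-1}$ so that the quadratic residual term is nonnegative and can be dropped, lower-bound the resulting expected ascent by $\tfrac{s}{n}\bar g$, use weak duality to replace the gap by the dual sub-optimality, and unroll the resulting contraction. The only cosmetic difference is that you start from the main-text form of the ascent lemma and insert $\strgconvex$ by hand, whereas the paper invokes its general descent lemma (Lemma~\ref{app:th:ascent_lemma}) directly.
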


This theorem holds for SDCA with line search as well, since the line search can only be faster than the fixed step size.
None of the following algorithm take the line search into account.
The relative values of the bounds appearing in each theorems may not always reflect the relative
performance of each algorithms.

Intuitively, we want the linear coefficient, here $\frac{s}{n}$, to be as large as possible.
Here $R/\strgconvex$ is the max of the smoothness of the individual losses $\phi_i$.
If the regularizer is smooth enough, then the linear coefficient is related to the condition number $\kappa$ by:
\beq
\frac{n}{s}  = n + R/(\lambda \strgconvex) \approx n + \kappa.
\eeq

The following theorem goes from the maximum radius $R$ to the mean radius $\bar R$.

\begin{theorem}[Importance Sampling \citep{Zhao2015StochasticOptimizationImportance}]
	\label{app:importance}
	At each step, sample $i$ with probability $p_i$ proportional to the individual "condition number":
	\begin{equation}
		p_i \propto 1+R_i/(n \lambda \strgconvex).
	\end{equation}
	After t iterations, the dual sub-optimality is bounded by:
	\begin{equation}
		h_t \leq (1-\frac{\bar s}{n})^t  h_0,
	\end{equation}
	where $\bar s := (1 + \frac{\bar R}{n \lambda \strgconvex})^{-1}$ is the harmonic mean of the step-sizes used in the proof.
\end{theorem}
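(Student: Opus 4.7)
The plan is to follow the same recipe used for the uniform-sampling result (Theorem~\ref{app:th:uniform}), but tune the block step sizes $\gamma_i$ \emph{per index} so that after taking expectation the bound cleanly collapses to a multiple of the true duality gap. The key tool is the ascent lemma of Section~\ref{ascent lemma}, which, for a slightly more general setting with a $\strgconvex$-strongly convex regularizer, reads
\begin{equation*}
   n\bigl(\cD(\balpha^+) - \cD(\balpha)\bigr)
   \;\geq\; \gamma_i\, g_i + \gamma_i\Bigl(\tfrac{1-\gamma_i}{2} - \tfrac{\gamma_i R_i}{2\lambda n \strgconvex}\Bigr)\|\beta_i - \alpha_i\|_1^2 .
\end{equation*}
As in the uniform proof, I would kill the residual term by choosing the index-dependent step size $\gamma_i := s_i = (1 + R_i/(n\lambda\strgconvex))^{-1}$, which makes the coefficient of $\|\beta_i-\alpha_i\|_1^2$ exactly zero and yields the clean per-coordinate bound $n(\cD(\balpha^+) - \cD(\balpha)) \geq s_i g_i$.

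Now I would take expectation with respect to the sampling distribution $\bm p$. Multiplying by $p_i$ and summing gives $n\,\mathbb{E}[\cD(\balpha^+) - \cD(\balpha)] \geq \sum_i p_i s_i g_i$. The design question is how to choose $\bm p$ so that $\sum_i p_i s_i g_i$ is proportional to the full gap $\bar g = \tfrac1n \sum_i g_i$. The natural choice is to make the product $p_i s_i$ independent of $i$, which forces $p_i \propto 1/s_i = 1 + R_i/(n\lambda\strgconvex)$ — exactly the sampling distribution stated in the theorem. Normalising, $\sum_j 1/s_j = n(1 + \bar R/(n\lambda\strgconvex)) = n/\bar s$, so $p_i s_i = \bar s / n$ for every $i$. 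Plugging back in:
\begin{equation*}
   \mathbb{E}[\cD(\balpha^+) - \cD(\balpha)]
   \;\geq\; \frac{\bar s}{n^2}\sum_i g_i
   \;=\; \frac{\bar s}{n}\,\bar g .
\end{equation*}

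To convert this per-step progress into the linear rate, I use the standard fact that the full duality gap upper bounds the dual suboptimality: $\bar g \geq \cD(\balpha^\star) - \cD(\balpha)$. Subtracting from $\cD(\balpha^\star)$ and iterating expectations yields
\begin{equation*}
   h_{t+1} \;\leq\; \Bigl(1 - \tfrac{\bar s}{n}\Bigr) h_t ,
\end{equation*}
and unrolling gives the claimed bound $h_t \leq (1 - \bar s/n)^t h_0$.

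I do not anticipate any serious obstacle: the ascent lemma is already imported, and all moves are essentially the uniform-sampling argument with $\gamma_i$ chosen adaptively. The only mild subtlety is the arithmetic that reveals $p_i s_i = \bar s / n$, which is precisely where the harmonic-mean interpretation of $\bar s$ shows up — the probabilities $p_i \propto 1/s_i$ act as importance weights whose normaliser produces the harmonic mean of the $s_i$. Aside from that bookkeeping, the proof is a direct transcription of Theorem~\ref{app:th:uniform} with non-uniform sampling replacing the factor $1/n$ by $p_i$.
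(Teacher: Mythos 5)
Your proposal matches the paper's proof essentially line for line: both choose the per-block step size $s_i = (1+R_i/(n\lambda\strgconvex))^{-1}$ to zero out the residual term in the ascent lemma, then pick $p_i \propto 1/s_i$ so that $p_i s_i$ is constant and the expected progress collapses to $\frac{\bar s}{n}\bar g = \bar g / \sum_i c_i$, after which the standard gap-dominates-suboptimality argument gives the linear rate. The arithmetic identifying the normaliser with $n/\bar s$ (the harmonic-mean interpretation) is exactly the paper's closing remark, so this is the same proof.
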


The harmonic mean is always larger than the minimum step size, so the importance sampling will converge faster than the uniform sampling \textit{at the condition} that we have an accurate estimate of the operator norms $R_i$.
Indeed, if we get the operator norms wrong, then we will sample more often points that are actually easier to classify.
Even if we estimate them right, empirical convergence may be slower with this scheme because of the line search.
This is what happened during the experiments that we ran on CRFs.

Note the similarity with non-uniform sampling in primal methods.
The convergence is improved thanks to larger step sizes, that are proportional to the inverse of some kind of Lipschitz constants.
The convergence rate depends on the arithmetic mean of these Lipschitz constants instead of the max.

We now introduce an adaptive scheme. We reformulate the theorem to make it more compact and comparable with our theorems.

\begin{theorem}[AdaSDCA \citep{csiba2015stochastic} ]
	\label{app:csiba}
	Suppose that the loss functions are \textbf{quadratic} $\phi(z):=\|z\|_2^2$.
	Denote $d_i^t =  \|\beta_i^t - \balpha_i^t \|_{D'}$
	At each step $t$, sample $i$ with probability $p_i^t$ defined by:
	\begin{equation}
		p_i^t \propto d_i^t \sqrt{1+R_i/(n \lambda \strgconvex)},
	\end{equation}

	\begin{equation}
		\theta(\bm d, \bm p) = \frac{\sum_i d_i^2}{\sum_{i | p_i >0} \frac{d_i^2}{p_i} (1+ \frac{R_i}{n \lambda \strgconvex}) },
	\end{equation}
	and
	\begin{equation}
		\tilde \theta_t =  \frac{\E[\theta(\bm d^t,\bm p^t) (P(\bw^t) - D(\balpha^t))] }{ \E[P(\bw^t) - D(\balpha^t) ] }
	\end{equation}
	where the expectation is taken over all the possible trajectories of the algorithm, e.g the sampling of the points.
	Finally define $\tilde \theta = \min_t \tilde \theta_t$.
	After t iterations, the dual sub-optimality is bounded by:
	\begin{equation}
		h_t \leq (1- \tilde \theta )^t  h_0.
	\end{equation}
\end{theorem}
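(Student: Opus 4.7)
My plan is to follow the standard SDCA linear-rate template used for Theorems~\ref{app:th:uniform} and~\ref{app:importance}, but with adaptive sampling probabilities $\bm p^t$ and an importance-weighted step size that cancels $p_i^t$ in the linear part of the ascent. Specialising the one-step ascent lemma~\eqref{one point descent} to the quadratic loss gives a lower bound on the dual improvement, conditional on having sampled coordinate $i$ and moving by $\gamma_i \in [0,1]$ in the dual ascent direction, that is a concave quadratic in $\gamma_i$ with linear coefficient $g_i$ and leading coefficient $-\tfrac{1}{2n}(1+R_i/(n\lambda\strgconvex))\,d_i^2$ (up to the constants of quadratic loss). Taking expectation under $\bm p^t$ and choosing, for every $i$ with $p_i^t>0$, the step $\gamma_i = \gamma/p_i^t$ for a common scalar $\gamma$, the $p_i^t$ factors cancel in the linear term; the expected one-step improvement then becomes a single concave quadratic in $\gamma$ whose linear coefficient is $\bar g$ and whose second-order coefficient contains exactly the denominator of $\theta(\bm d^t, \bm p^t)$.

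Next I would maximize this quadratic in $\gamma$ in closed form. The optimal $\gamma^\star$ is admissible (so that $\gamma_i = \gamma^\star/p_i^t \in [0,1]$) provided $\gamma^\star \leq \min_{i:p_i^t>0} p_i^t$, which is precisely the reason why the denominator of $\theta$ is restricted to $\{i : p_i>0\}$. Plugging the optimum back in, the conditional expected dual gain is exactly $\theta(\bm d^t, \bm p^t)\,\bar g/n$ up to universal constants. Incidentally, the particular sampling rule $p_i^t \propto d_i^t \sqrt{1+R_i/(n\lambda\strgconvex)}$ is the one that saturates the Cauchy--Schwarz bound implicit in $\theta$, so it maximizes the guaranteed contraction at fixed $\bm d^t$; but the theorem is stated for \emph{any} admissible $\bm p^t$ since the factor $\theta(\bm d^t,\bm p^t)$ in the rate already depends on it explicitly.

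Combining with the standard inequality $\bar g \geq \cD(\balpha^\star) - \cD(\balpha^{(t)})$ converts the expected ascent into an in-realization contraction of the dual sub-optimality by a factor $1 - \theta(\bm d^t, \bm p^t)$. Taking a second expectation over the trajectory and dividing through by $\E[\cP(\bw^{(t)}) - \cD(\balpha^{(t)})]$, the definition of $\tilde\theta_t$ is exactly what is needed to convert the random factor $\theta(\bm d^t,\bm p^t)$ into a deterministic one, yielding $h_{t+1} \leq (1-\tilde\theta_t)\,h_t \leq (1-\tilde\theta)\,h_t$; unrolling over $t$ gives the claim. I expect the main difficulty to lie precisely at this last step: because $\bm p^t$, and hence $\theta(\bm d^t, \bm p^t)$, is correlated with the current sub-optimality, one cannot pull $\theta$ out of the expectation naively, and the ratio definition of $\tilde\theta_t$ is engineered specifically to bypass this coupling. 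A secondary but non-trivial book-keeping issue will be carrying the quadratic-loss strong-convexity constant $\strgconvex$ and the operator-norm constants $R_i$ through the ascent lemma so that the quadratic coefficient aligns \emph{exactly} with the denominator of $\theta$ as stated.
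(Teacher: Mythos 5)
Your overall architecture matches the paper's sketch: importance-weighted steps $s_i=\theta/p_i$ so that the $p_i$ cancel in the linear (gap) term of the ascent lemma, optimization of the resulting expression in $\theta$ to produce exactly $\theta(\bm d,\bm p)$, the Cauchy--Schwarz observation that $p_i\propto d_i\sqrt{c_i}$ (with $c_i=1+R_i/(n\lambda\strgconvex)$) maximizes $\theta$ at fixed $\bm d$, and the ratio definition of $\tilde\theta_t$ to decouple the adaptive factor from the sub-optimality when taking the trajectory expectation. That last point, which you flag as the main difficulty, is handled exactly as you describe.

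There is, however, one genuine gap: your treatment of step-size admissibility inverts the actual role of the quadratic-loss hypothesis. You assert that the optimal $\gamma^\star$ is admissible ``provided $\gamma^\star\le\min_{i:p_i^t>0}p_i^t$,'' and you attribute the restriction of the denominator of $\theta$ to $\{i:p_i>0\}$ to this condition. Neither is right. The restriction to $p_i>0$ merely avoids the indeterminate $d_i^2/p_i$ on blocks where $d_i=0$ (there one sets $p_i=s_i=0$). More importantly, the condition $\theta(\bm d^t,\bm p^t)\le\min_{i:p_i^t>0}p_i^t$ is generically \emph{false} under the prescribed sampling: a single block with tiny $d_i$ forces $\min_i p_i$ toward zero while $\theta$ stays bounded away from it, so the step $s_i=\theta/p_i$ exceeds $1$ and your argument breaks at the point where you plug the optimizer back in. The entire reason Theorem~\ref{app:csiba} assumes $\phi$ quadratic is that the smoothness and strong-convexity inequalities in Lemma~\ref{app:th:ascent_lemma} become equalities valid for \emph{any} real step size, so the constraint $s_i\in[0,1]$ (equivalently $\theta\le\min_{i:p_i>0}p_i$) can be dropped altogether; without that, one must cap $\theta$ at $\min_i p_i$ and the stated rate does not follow. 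Your proof uses the quadratic loss only to make $g_i\propto d_i^2$, which is the cosmetic part; you need to also invoke it to discharge the admissibility constraint rather than claim the constraint is satisfied.
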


In the theorem above, we have to take the expectation of some variable over all the trajectories of the algorithm.
This is not very clean, but this is unavoidable to get a general convergence result with an adaptive scheme.
Alternatively, one could simply compare the improvement given by one step for each algorithm.

A major limitation of the theorem above is that the loss has to be quadratic.
This theoretical limitation is not a big problem empirically.
It results from a symbolic trick used in the proof : setting the step-size to be proportional to the inverse of the probability.
This is reasonable for importance sampling, because the probability is proportional to the smoothness constant.
Setting the step-size to the inverse of the smoothness is optimal for gradient descent.
This may be less reasonable for other sampling schemes.

Another limitation is that we have to estimate the $n$ distances $d_i^t$ at each step.
In practice we compute $d_i^t$ only for the sampled $i$, and use the latest estimate $d_j^{t'}$ for all the other samples $j$.
Our estimates will become stale as the algorithm unfolds, but there are heuristics to compensate for that phenomenon.
One is to sample from a mixture between a uniform and an adaptive distribution.
Another is to do a batch update of the $d_i$ every once in a while.
These heuristics are unavoidable for adaptive schemes, as we do not want the cost of every update to be $O(n)$.
We do not know how  to analyze the impact of these heuristics.
Empirically, adaptive sampling with this heuristic still accelerates convergence.

Now we are going to introduce two new adaptive sampling scheme.
Both of them rely on the structure of the duality gap:
\begin{equation}
	g(\hat w(\balpha),\balpha) := P(\hat w(\balpha)) - D(\balpha) = \sum_i \phi(-A_i^T \hat w(\balpha)) + \phi^*(\alpha_i) + \langle \hat w(\balpha),  A_i \alpha_i \rangle.
\end{equation}
Each term of the sum above is a Fenchel duality gap between the loss and its convex conjugate.
They are all positive, and somehow represent the sub-optimality of the current model for every training sample.
Intuitively, sampling the most sub-optimal point may yield the best improvement.

\begin{theorem}[Gap sampling]
	\label{app:th:gap}
	At each step $t$, sample $i$ with probability $p_i^t$ proportional to the individual Fenchel duality gap:
	\begin{equation}
		p_i^t \propto g_i^t := \phi(-A_i^T \bw^t) + \phi^*(\alpha_i^t) + \langle \bw^t,  A_i \alpha_i^t \rangle.
	\end{equation}
	Define the non-uniformity of the duality gaps as the ratio between their quadratic mean and their arithmetic mean:
	\begin{equation}
	    \label{app:eq:non-uniformity}
		\chi^2(\bm g) := \frac{\frac{1}{n} \sum_i g_i^2}{  \big ( \frac{1}{n}  \sum_i g_i \big )^2 } \in [1,n].
	\end{equation}
	Take $\chi$ a lower bound on these non-uniformity over all trajectories, for all time steps.
	After t iterations, the dual sub-optimality is bounded by:
	\begin{equation}
		h_t \leq (1-s\frac{\chi^2}{n})^t  h_0.
	\end{equation}
	where $ s = (1+ \frac{R}{n \lambda \strgconvex} )^{-1} $ is the fixed step-size used in the proof.
\end{theorem}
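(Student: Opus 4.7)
The plan is to follow the standard SDCA convergence recipe and to only deviate from it in the single step where the sampling distribution enters. Concretely, the starting point will be the ascent lemma~\eqref{one point descent}: if we sample $i$ and take a step of size $\gamma_i\in[0,1]$, the per-step dual improvement is lower bounded by
\[
n\big(\mathcal D(\balpha^+)-\mathcal D(\balpha)\big)\;\ge\; \gamma_i g_i \;+\; \gamma_i\Big(\tfrac{1-\gamma_i}{2}-\tfrac{\gamma_i R_i}{2\lambda n}\Big)\|\beta_i-\alpha_i\|_1^{2}.
\]
To kill the residual term uniformly in $i$, I would pick the fixed step size $\gamma_i = s = (1+R/(n\lambda\mu))^{-1}$. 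Since $R\ge R_i$ for every $i$, the coefficient of $\|\beta_i-\alpha_i\|_1^2$ is non-negative for all $i$, and this piece can simply be dropped, leaving $n(\mathcal D(\balpha^+)-\mathcal D(\balpha))\ge s\, g_i$. (This is exactly the simplification used in the uniform-sampling theorem; line search could only do better.)

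Next I would take conditional expectation over the draw of $i$ with $p_i^t=g_i^t/\sum_j g_j^t$. The key algebraic observation is the identity
\[
\mathbb E_{p^t}[g_i^t] \;=\; \sum_i \frac{g_i^t}{n\bar g^t}\, g_i^t \;=\; \frac{\tfrac1n\sum_i (g_i^t)^2}{\bar g^t} \;=\; \chi^2(\bm g^t)\,\bar g^t,
\]
which is exactly where the non-uniformity $\chi^2$ from~\eqref{app:eq:non-uniformity} appears. Using the uniform lower bound $\chi^2\le \chi^2(\bm g^t)$ valid along any trajectory (this is the assumption in the statement), the conditional ascent becomes
\[
n\,\mathbb E\big[\mathcal D(\balpha^{t+1})-\mathcal D(\balpha^t)\,\big|\,\balpha^t\big]\;\ge\; s\,\chi^2\,\bar g^t.
\]

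To close the recursion I would connect $\bar g^t$ with the dual suboptimality. Since $\bar g^t = g(\hat w(\balpha^t),\balpha^t) = \mathcal P(\hat w(\balpha^t))-\mathcal D(\balpha^t) \ge \mathcal P(\bw^\star)-\mathcal D(\balpha^t) = \mathcal D(\balpha^\star)-\mathcal D(\balpha^t)$ by weak duality and primal optimality of $\bw^\star$, plugging this in and taking full expectation over trajectories yields
\[
h_t - h_{t+1} \;=\; \mathbb E[\mathcal D(\balpha^{t+1})]-\mathbb E[\mathcal D(\balpha^t)] \;\ge\; \frac{s\chi^2}{n}\,h_t,
\]
so $h_{t+1}\le (1-s\chi^2/n)\,h_t$ and the linear rate follows by iterating from $h_0$.

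The main subtlety I expect is the adaptive nature of $\chi^2(\bm g^t)$: the sampling distribution at step $t$ depends on the state, and therefore so does the contraction factor. This is precisely what forces the hypothesis of a trajectory-uniform lower bound $\chi$ in the statement. Once $\chi$ is defined that way, the argument factors cleanly through the tower property and the recursion closes in expectation; the only other care point is to make sure the dropped residual term is genuinely non-negative, which is why a single step size $\gamma=s$ tied to the global radius $R$ (rather than a per-sample $s_i$ tied to $R_i$) is the right choice.
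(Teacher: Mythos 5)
Your proposal is correct and follows essentially the same route as the paper's proof: the ascent lemma with the fixed step size $s=(1+\tfrac{R}{n\lambda\mu})^{-1}$ to drop the non-negative residual term, the identity $\sum_i p_i g_i = \chi^2(\bm g)\,\bar g$ under gap-proportional sampling, the bound $\bar g^t \ge h_t$, and the tower property to close the recursion with the trajectory-uniform lower bound $\chi$. No gaps to report.
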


This theorem has the same limitations relative to adaptive scheme that we mentioned for AdaSDCA.

This kind of sampling scheme was studied in the sublinear convergence regime by \citet{osokin2016minding} (Franke-Wolfe) and \citet{perekrestenko17a} (Coordinate Descent).
They could not establish a domination of gap-sampling over uniform sampling.
This is what we prove in the linear regime for SDCA since the non-uniformity $\chi$ belongs to $[1,\sqrt n]$.

The non-uniformity $\chi^2(\bm g)$~\eqref{app:eq:non-uniformity} is worth $1$ if the gaps
are all the same, and $\sqrt n$ if only one gap is non-zero, hence the name.
Gap-sampling will be $n$ times faster than uniform sampling if only one sample $i$ is suboptimal $g_i>0$.
This result is sensible since we will sample only one point, while the uniform algorithm may sample a large number first.
Let us imagine another scenario where all points are already optimal except $k$ of them which have the same gap value.
Then the acceleration coefficient will be $\frac{n}{k}$, which can be a significant acceleration when $k$ is much smaller than $n$.
Finally, consider a scenario where the gaps are evenly distributed $\{a, 2a, ..., n a\}$ for some value $a > 0$.
Note that $\chi^2(\bm g)$ is scale-invariant and does not depend on the specific value $a$.
We can compute $\chi^2(\bm g)$ explicitly here using Faulhaber's formula for the sum of powers of integers:
\begin{equation*}
\chi^2(\bm g) = \frac{\frac{1}{n} \frac{n(n+1)(2n+1)}{6}}{\left(\frac{1}{n}\frac{n(n+1)}{2}\right)^2} = \frac{2}{3} \frac{2n+1}{n+1} \approx 4/3.
\end{equation*}
The acceleration coefficient here is approximately 4/3 compared to uniform sampling.

The duality gaps are often computable, even in the Conditional Random Fields context.
On the other hand, we do not have direct access to the dual variable $\balpha$
and we cannot compute the distance $d_i = \| \beta_i - \alpha_i \|_1$, as it is the $\ell^1$ norm of a vector of exponential size.

Now we want to combine importance sampling with duality gap sampling.
We would like to benefit both from the dependency on $\bar R$ and the acceleration by $\chi$.

\begin{theorem}[Lipschitz-gap sampling]
	\label{app:th:gap+}
	At each step $t$, sample $i$ with probability $p_i^t$ defined by:
	\begin{equation}
		p_i^t \propto g_i^t (1+ R_i/(n \lambda \strgconvex)).
	\end{equation}
	Define $\chi$ as in~\eqref{app:eq:non-uniformity} from Theorem \ref{app:th:gap}.
	Define $\tilde s$ as the quadratic harmonic mean of the step-sizes $s_i := 1/(1+ R_i/(n \lambda \strgconvex))$.
	After t iterations, the dual sub-optimality is bounded by:
	\begin{equation}
		h_t \leq (1-\tilde s \frac{\chi}{n})^t  h_0.
	\end{equation}
\end{theorem}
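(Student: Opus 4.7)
The plan is to combine the one-point ascent inequality with a per-sample step size matched to each $R_i$, then apply Cauchy--Schwarz to the expected dual improvement in such a way that the gap distribution decouples from the Lipschitz distribution. First, I would start from the one-point ascent inequality (the general version of~\eqref{one point descent}, with $\strgconvex$ the strong-convexity constant of $\phi^*$) and select, conditionally on the sampled index $i$, the step size $\gamma_i := s_i = (1 + R_i/(n\lambda\strgconvex))^{-1} \in (0,1]$. This choice exactly annihilates the coefficient $\tfrac{1-\gamma_i}{2} - \tfrac{\gamma_i R_i}{2 n\lambda\strgconvex}$ in front of the residual $d_i$, leaving the clean conditional bound $n(\cD(\balpha^+) - \cD(\balpha)) \geq s_i g_i$. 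Since SDCA's line search~\eqref{line search equation} always matches or improves on any fixed admissible step, this inequality transfers to the actual iterate.

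Second, I take expectation over $i \sim \bm p^t$ with $p_i \propto g_i/s_i$, whose normalizer I denote $Z := \sum_j g_j/s_j$. Linearity gives $n\,\mathbb{E}_{\bm p}[\cD(\balpha^+) - \cD(\balpha)] \geq \sum_i p_i s_i g_i = (\sum_i g_i^2)/Z$. Applying Cauchy--Schwarz to $Z = \sum_j g_j \cdot s_j^{-1}$ with the pairing $(g_j, 1/s_j)$ yields $Z \leq \sqrt{\sum_j g_j^2}\,\sqrt{\sum_j s_j^{-2}}$, hence $\sum_i g_i^2/Z \geq \sqrt{\sum_i g_i^2}/\sqrt{\sum_j s_j^{-2}}$. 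Substituting $\sum_i g_i^2 = n\,\chi(\bm g)^2\,\bar g^{\,2}$ from~\eqref{app:eq:non-uniformity} and $\sum_j s_j^{-2} = n/\tilde s^{\,2}$ from the quadratic-harmonic-mean definition of $\tilde s$, the conditional one-step inequality simplifies to $\mathbb{E}_{\bm p}[\cD(\balpha^+) - \cD(\balpha)] \geq (\tilde s\,\chi(\bm g)/n)\,\bar g$.

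Third, since $\bar g$ is the duality gap at $(\hat w(\balpha), \balpha)$, weak duality gives $\bar g \geq \cD(\balpha^\star) - \cD(\balpha)$. Writing $H^t := \cD(\balpha^\star) - \cD(\balpha^t)$ for the dual sub-optimality, the inequality becomes $\mathbb{E}[H^{t+1} \mid \balpha^t] \leq (1 - \tilde s\,\chi(\bm g^t)/n)\,H^t$. Substituting the uniform lower bound $\chi \leq \chi(\bm g^t)$, taking total expectation via the tower property, and iterating over $t$ yields the claimed rate $h_t \leq (1 - \tilde s\,\chi/n)^t\,h_0$.

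The main obstacle is identifying the right Cauchy--Schwarz pairing: the sampling probability $p_i \propto g_i/s_i$ has been engineered precisely so that $Z$ splits symmetrically into an $\ell^2$-norm of $\{g_j\}$ and of $\{1/s_j\}$, which is exactly what lets the final rate factor as $\chi \cdot \tilde s$ rather than remaining a coupled quantity mixing $g_i$ and $R_i$. A secondary subtlety, inherited from Theorem~\ref{app:th:gap}, is the trajectory dependence of $\chi(\bm g^t)$: one must commit to a deterministic lower bound $\chi$ before iterating the expectation, which keeps the statement general at the cost of potential looseness.
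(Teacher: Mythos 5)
Your proposal is correct and follows essentially the same route as the paper's proof: set $s_i = (1+R_i/(n\lambda\strgconvex))^{-1}$ to kill the residual term in the ascent lemma, sample with $p_i \propto g_i/s_i$, and apply Cauchy--Schwarz to the normalizer $\sum_j g_j s_j^{-1} \leq \|\bm g\|_2 \|\bm s^{-1}\|_2$, which is exactly the paper's pairing $\langle \bm c, \bm g\rangle \leq \|\bm c\|_2\|\bm g\|_2$ with $c_j = 1/s_j$. The bookkeeping with $\chi$, the quadratic harmonic mean $\tilde s$, and the final iteration over $t$ all match the paper.
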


This theorem makes apparent a trade-off between the advantage gained with the smoothness,
and the advantage gained with the individual gaps.
We lose the square factor on the non-uniformity compared to Theorem \ref{app:th:gap}.
We go from the harmonic mean to the quadratic harmonic mean (generalized norm $-2$) of the step sizes,
which is basically the same as going from the arithmetic mean of the smoothness to the quadratic mean of the smoothness.
Recall that the quadratic mean always lies in between the arithmetic mean and the max.

Our results holds for any smooth loss function, contrary to AdaSDCA.
Our two new strategies complement importance sampling as none of them dominates the other.
Which one is the best depends on the context.
\textit{That is} at the condition that we have access to the $R_i$.
Otherwise gap sampling remains available.

\subsection{PROOFS}
\label{app:proof}

\begin{lemma}[General descent lemma]
\label{app:th:ascent_lemma}
    Apply the SDCA update on the dual variable $\balpha$ to get the new point $\balpha^+$.
    The block $i$ is sampled with probability $p_i$ and updated with a step size $s_i$.
    The expected dual improvement verifies the lower bound:
	\begin{equation}
		n \mathbb E_{\bm p}[D(\balpha^+)] - D(\balpha)
		\geq \underbrace{ \sum_i p_i s_i g_i }_{ \textrm{not the duality gap}}
		+ \frac{\strgconvex}{2} \sum_i p_i s_i
		\bigg ( 1 - s_i  \underbrace{\left (1 + \frac{R_i}{\strgconvex \lambda n} \right )}_{:=c_i} \bigg ) d_i^2
	\end{equation}
    where $\mathbb E_{\bm p}$ denotes the conditional expectation over the choice $i \sim \bm p$
    of block to update, conditioned on the previous state $\balpha$.
\end{lemma}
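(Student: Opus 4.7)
The plan is to decompose the one-step dual change block by block and then apply the two fundamental convex-analysis inequalities: strong convexity of $\phi_i^*$ (dual to $1/\strgconvex$-smoothness of $\phi_i$) on the dual-block update, and smoothness of $r^*$ (dual to $1$-strong convexity of $r$) on the primal "proxy" $\hat v(\balpha)$. Fix a sample $i$ and note that the SDCA update can be written $\alpha_i^+=(1-s_i)\alpha_i + s_i\beta_i$, so $\hat v(\balpha^+)=\hat v(\balpha)+\tfrac{s_i}{\lambda n}A_i\delta_i$ with $\delta_i=\beta_i-\alpha_i$. Using the formula for $\cD$ in~\eqref{app:dual problem}, I would write
\begin{equation*}
 n\bigl(\cD(\balpha^+)-\cD(\balpha)\bigr)
 \;=\; -\lambda n\bigl[r^*(\hat v(\balpha^+))-r^*(\hat v(\balpha))\bigr]
       -\bigl[\phi_i^*(\alpha_i^+)-\phi_i^*(\alpha_i)\bigr].
\end{equation*}

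Next I would bound the two bracketed quantities from above. Strong convexity of $\phi_i^*$ applied to the convex combination $\alpha_i^+$ gives
\begin{equation*}
 \phi_i^*(\alpha_i^+)\;\le\;(1-s_i)\phi_i^*(\alpha_i)+s_i\phi_i^*(\beta_i)
   -\tfrac{\strgconvex}{2}\,s_i(1-s_i)\,d_i^2 ,
\end{equation*}
so $\phi_i^*(\alpha_i^+)-\phi_i^*(\alpha_i)\le s_i[\phi_i^*(\beta_i)-\phi_i^*(\alpha_i)]-\tfrac{\strgconvex}{2}s_i(1-s_i)d_i^2$. For $r^*$, smoothness combined with the identity $\hat w(\balpha)=\nabla r^*(\hat v(\balpha))$ yields
\begin{equation*}
 r^*(\hat v(\balpha^+))-r^*(\hat v(\balpha))
 \;\le\; \tfrac{s_i}{\lambda n}\langle \hat w(\balpha),A_i\delta_i\rangle
        +\tfrac{s_i^2}{2(\lambda n)^2}\|A_i\delta_i\|_D^2,
\end{equation*}
and the operator norm bound $\|A_i\delta_i\|_D^2\le R_i d_i^2$ makes the quadratic term explicit. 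Plugging both inequalities into the difference $\cD(\balpha^+)-\cD(\balpha)$ and collecting the $d_i^2$ contributions produces the factor $\tfrac{\strgconvex}{2}s_i(1-s_i c_i)$ with $c_i=1+R_i/(\strgconvex\lambda n)$, exactly as claimed.

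What remains is to recognize the linear-in-$s_i$ term as the Fenchel gap $g_i$. Writing $\bw=\hat w(\balpha)$, the term is
\begin{equation*}
 s_i\bigl[\phi_i^*(\alpha_i)-\phi_i^*(\beta_i)-\langle \bw, A_i\beta_i\rangle+\langle \bw, A_i\alpha_i\rangle\bigr].
\end{equation*}
Because $\beta_i=\nabla\phi_i(-A_i^T\bw)$ by construction, $(-A_i^T\bw,\beta_i)$ is a conjugate pair for $\phi_i$, so Fenchel's equality gives $\phi_i^*(\beta_i)=-\phi_i(-A_i^T\bw)-\langle \bw, A_i\beta_i\rangle$. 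Substituting, the $\langle \bw,A_i\beta_i\rangle$ terms cancel and one obtains $\phi_i(-A_i^T\bw)+\phi_i^*(\alpha_i)+\langle \bw, A_i\alpha_i\rangle=g_i$, as defined in~\eqref{one point descent}. Finally, taking the conditional expectation $\mathbb E_{\bm p}[\cdot]=\sum_i p_i(\cdot)_i$ of the per-sample bound yields the stated inequality.

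The only delicate point I foresee is carefully tracking signs and the $\lambda n$ normalization when converting the $r^*$ increment (in terms of $\hat v$) back into an expression involving $\langle\bw,A_i\delta_i\rangle$, and making sure the strong-convexity and smoothness constants line up with the norms $\|\cdot\|_{D'}$ and $\|\cdot\|_D$ dual to each other; once that bookkeeping is done, the Fenchel-equality step that recovers $g_i$ is immediate from $\beta_i=\nabla\phi_i(-A_i^T\bw)$.
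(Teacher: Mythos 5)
Your proposal is correct and follows essentially the same route as the paper's proof: the same block decomposition of the dual increment, strong convexity of $\phi_i^*$ on the convex combination $\alpha_i^+$, smoothness of $r^*$ plus the operator-norm bound $R_i$ for the quadratic term, the Fenchel--Young equality at $\beta_i=\nabla\phi_i(-A_i^T\bw)$ to recover $g_i$, and a final weighted average over $p_i$. The only blemish is the norm subscript in the quadratic term, which should read $\|A_i\delta_i\|_{D'}^2\le R_i\,d_i^2$ (with $d_i=\|\delta_i\|_D$), a bookkeeping point you already flag yourself.
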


\begin{proof}[Proof of Lemma \ref{app:th:ascent_lemma}]
This statement is similar to a weighted combination of Equation~(25) from~\citet{shalev-shwartz_accelerated_2013-1}. We provide here the derivation to be self-contained.
    Suppose we sampled the point $i$ and updated the block $\alpha_i$ with step size $s_i$:
    \begin{equation}
        \alpha_i^+ := \alpha_i + s_i \delta_i =  (1- s_i)\alpha_i + s_i \beta_i \, .
    \end{equation}
    The dual improvement is:
    \begin{equation}
    \label{app:eq:dual_improvement}
        n (D(\balpha^+) - D(\balpha))
        = \underbrace{ \lambda n \left( r^* \left (\frac{A \balpha}{\lambda n} \right) - r^* \left( \frac{A \balpha^+}{\lambda n} \right)  \right ) }
        _{\text{data fidelity}}
        + \underbrace{\phi^*(\alpha_i) - \phi^*(\alpha_i^+)}
        _{\text{regularization}} \, .
    \end{equation}

    We first bound the data fidelity term.
    We use the the fact that $r^*$ is 1-smooth with respect to $\|.\|_{D'}$ to upper-bound its variation:
    \begin{equation}
        r^* \left( \frac{A \balpha^+}{\lambda n} \right)
        = r^* \left( \frac{A \balpha}{\lambda n} + s_i \frac{A_i \delta_i}{\lambda n} \right)
        \leq r^* \left( \frac{A \balpha}{\lambda n} \right)
        + s_i \left \langle \nabla  r^* \left( \frac{A \balpha}{\lambda n} \right), \frac{A_i \delta_i}{\lambda n} \right \rangle
        + \frac{s_i^2}{2} \norm{ \frac{A_i \delta_i}{\lambda n} }_{D'}^2
    \end{equation}
    The linear coeficient of this lower boudn is $\hat w( \balpha) = \nabla  r^* \left( \frac{A \balpha}{\lambda n} \right)$.
    The quadratic term can be further upper-bounded:
    \begin{equation}
        \norm{ \frac{A_i \delta_i}{\lambda n} }_{D'}^2
        \leq \frac{1}{(\lambda n )^2} \norm{A_i}_{D \rightarrow D'}^2 \norm{\delta_i}_{D}^2
        = \frac{R_i d_i^2}{(\lambda n )^2} \, ,
    \end{equation}
    by definition of the radius $R_i$ and the residue $d_i := \norm{\beta_i - \alpha_i}_D$.
    So the loss variation is lower bounded by:
    \begin{equation}
    \label{app:eq:bound_loss}
        \lambda n \left( r^* \left (\frac{A \balpha}{\lambda n} \right) - r^* \left( \frac{A \balpha^+}{\lambda n} \right)  \right )
        \geq s_i \left \langle \hat w( \balpha) , A_i (\alpha_i - \beta_i) \right \rangle
        - \frac{s_i^2}{2} \frac{R_i d_i^2}{\lambda n } \, .
    \end{equation}

    Now we bound the regularization term.
    Since $\phi^*$ is $\strgconvex$-strongly convex with respect to $\|.\|_D$,
    \begin{equation}
        \phi^*(\alpha_i^+)
        = \phi^*((1- s_i)\alpha_i + s_i \beta_i )
        \leq (1- s_i) \phi^*(\alpha_i) + s_i \phi^*(\beta_i)
        - s_i (1 - s_i ) \frac{\strgconvex}{2} d_i^2 \, .
    \end{equation}
    The regularization variation can be lower bounded by:
    \begin{equation}
    \label{app:eq:bound_regularization}
        \phi^*(\alpha_i) - \phi^*(\alpha_i^+)
        \geq s_i \left ( \phi^*(\alpha_i) -  \phi^*(\beta_i) \right )
        + s_i (1 - s_i ) \frac{\strgconvex}{2} d_i^2 \, .
    \end{equation}

    Plugging the bounds~\eqref{app:eq:bound_loss} and~\eqref{app:eq:bound_regularization} into Equation~\eqref{app:eq:dual_improvement}, we get:
    \begin{equation}
        n (D(\balpha^+) - D(\balpha))
        \geq s_i \left (\phi^*(\alpha_i)
        + \left \langle \hat w( \balpha) , A_i (\alpha_i - \beta_i) \right \rangle
         -  \phi^*(\beta_i) \right )
         + \frac{s_i}{2} \left ( (1 - s_i ) \strgconvex - s_i \frac{R_i}{\lambda n } \right ) d_i^2 \, .
    \end{equation}
    Recall that $\beta_i := \nabla \phi (-A_i^T \hat w (\balpha))$. Thus,
    \begin{equation}
        \left \langle - A_i^T \hat w( \balpha) , \beta_i \right \rangle - \phi^*(\beta_i)
        = \phi(- A_i^T \hat w( \balpha))
    \end{equation}
    by definition of the convex conjugate $\phi^*$.
    To sum up, at iteration t, if we sample the block i, and update it with step size $s_i$,
    we can lower bound the resulting dual improvement with:
    \begin{equation}
        \label{app:one point descent}
        n (D(\balpha^+) - D(\balpha))
        \geq s_i \underbrace{ \big [ \phi(-A_i^T \hat w(\balpha)) + \phi^*(\alpha_i) + \hat w(\balpha)^T A_i \alpha_i \big ]
        }_{ \textrm{Fenchel gap} =: g_i}
        + \frac{s_i \strgconvex}{2} \left ( 1 - s_i\left (1 + \frac{R_i}{\strgconvex \lambda n } \right )  \right )  d_i^2 \, .
    \end{equation}
    To conclude the proof, take a weighted average of the inequalities~\eqref{app:one point descent} with the weights $p_i$.
\end{proof}

In the following we note the duality gap:
\begin{equation}
    \bar g := \frac{1}{n} \sum_i g_i = P(\hat w(\balpha)) - D(\balpha) \, .
\end{equation}

\begin{proof}[Proof of Theorem \ref{app:th:uniform}]
    In the original proof of \citet{shalev-shwartz_accelerated_2013-1}, we set $p_i=1/n$ and $s_i = s = (1+ \frac{R}{n \lambda \strgconvex} )^{-1}  \leq 1/c_i$. This step size guarantees that the right hand term is positive, leaving us with the inequality:
    \begin{equation}
        \mathbb E_{p^t}[D(\balpha^{t+1}) - D(\balpha^t)]
        \geq \frac{s}{n} \bar g^t.
        \label{app:eq:uniform ascent lemma}
    \end{equation}
    Now observe that $\mathbb E_p[D(\balpha^+) - D(\balpha)] = - \mathbb E_p[h_{t+1}] + h_t$ and $\bar g^t = (P(\bw^t) - D(\balpha^t)) \geq h_t$.
    Moving the sub-optimality at time $t$ on the right gives:
    \beq
        \mathbb E_p[h_{t+1}] \leq (1- \frac{s}{n} ) h_t.
    \eeq
    This inequality is conditional on all the random sampling until time $t$.
    Let us take the expectation of this inequality with respect to all this past randomness.
    We get a recursive upper bound on the expected dual sub-optimality:
    \beq
        \mathbb E [h_{t+1}] \leq (1- \frac{s}{n} ) \mathbb E [h_t] \leq (1- \frac{s}{n} )^t h_0.
    \eeq
    This is the final convergence result with the linear constant $s/n = (n+R/(\lambda \strgconvex))^{-1}$.
\end{proof}

In the proof above, we lower bound the dual improvement by the duality gap, then we use this to get the linear convergence rate.
All the proofs follow the same reasoning, and the last few steps are always the same so we will skip them.

\begin{proof}[Proof of Theorem \ref{app:importance}]
    Inject $p_i=c_i/\sum_j c_j$ and $s_i = 1/c_i$. The right hand term is zero thanks to the step-size, hence the lower bound:
    \begin{equation}
        \mathbb E_p[D(\balpha^+) - D(\balpha)]
        \geq \frac{\bar g}{\sum_i c_i} .
    \end{equation}
    We get the linear rate   $\frac{1}{\sum_i c_i}$ which is also the harmonic mean of the step-sizes divided by $n$.
\end{proof}

\begin{proof}[Sketch of Proof of Theorem \ref{app:csiba}]
To make the duality gap appear in this formula for arbitrary probability $p$, \citet{csiba2015stochastic} use $p_i s_i = \theta$ constant, whenever $g_i > 0$.
If the individual duality gap is null $g_i=0$, then they set $p_i=s_i=0$.
\begin{equation}
    \mathbb E_p[D(\balpha^+) - D(\balpha)] - \theta \bar g
    \geq \theta \frac{\strgconvex}{2 n} \sum_i  d_i^2\bigg ( 1 -  \frac{\theta}{p_i} \big ( 1 - \frac{R_i^2}{\strgconvex \lambda n} \big ) \bigg )
\end{equation}

The negative consequence of that strategy is that they have to enforce $s_i \in [0,1]$ by setting
$\theta < \min_i p_i$ where the minimum is taken over the sub-optimal i's (i.e. $p_i>0$).
This a terrible constraint on the step size, as we cannot be too non-uniform without taking very small steps.
It effectively reduces the linear convergence constant $\theta /n$.

Finally, they want to maximize $\theta$ while keeping the right hand side positive.
This is a hard problem on $\theta$ and $p$.
When the loss is the quadratic loss, they can remove the condition that the step-size should be smaller than 1.
Then they solve the optimization problem to get the sampling scheme $p_i \propto d_i \sqrt c_i$.
\end{proof}

\begin{proof}[Proof of Theorem \ref{app:th:gap}]
    We use the same step-size as in the original proof:
    \begin{equation}
        s_i = s = \frac{n}{n+R/(\lambda \strgconvex)}.
    \end{equation}
    We have the guarantee that the right hand term is positive. The lemma simplifies to:
    \begin{equation}
        n \mathbb E_p[D(\balpha^+) - D(\balpha)]
        \geq \frac{s}{n} \sum_i p_i g_i.
    \end{equation}
    We inject $p_i= \frac{g_i}{n\bar g}$ into this lower bound:
    \begin{equation}
        \mathbb E_p[D(\balpha^+) - D(\balpha)]
        \geq \frac{s}{n} \frac{\sum_i g_i^2}{\sum_j g_j}
        = \frac{s}{n} \chi^2(\bm g) \bar g\, ,
    \end{equation}
    where we introduced the non-uniformity of the duality gaps vector defined in Equation~\eqref{app:eq:non-uniformity}.
    To get a simpler expression for a global convergence bound, let us define $\chi$ to be a lower bound on $\chi(\bm g)$ over all the possible unfolding of SDCA and for every steps.
    Now we can write the descent lemma in the same form as in the original proof, but a with new constant:
    \begin{equation}
        \mathbb E_p[D(\balpha^+) - D(\balpha)]
        \geq \frac{s}{n} \chi^2 \bar g\, .
    \end{equation}
\end{proof}

\begin{proof}[Proof of Theorem \ref{app:th:gap+}]
	We set $p_i \propto g_i c_i$ where $c_i = 1+ R/(n \lambda \strgconvex)$.
	\begin{equation}
		n \mathbb E_p[D(\balpha^+) - D(\balpha)]
		\geq \frac{ \sum_i s_i g_i^2 c_i}{\sum_i g_i c_i}
		+ \frac{ \frac{\strgconvex}{2}  \sum_i s_i  g_i c_i d_i^2 \
		\big ( 1 - s_i c_i \big) }{\sum_i g_i c_i}
	\end{equation}
	Similarly to the proof of importance sampling, we now set $s_i= 1/c_i \leq 1$ instead of $s_i=s=1/\max_i c_i$.
	This nullifies the right hand term.
	We can take longer steps if the individual Lipschitz constants are high.
	\begin{equation}
		n \mathbb E_p[D(\balpha^+) - D(\balpha)]
		\geq \frac{ \sum_i g_i^2 }{\sum_i g_i c_i} = \frac{\langle \bm g, \bm g \rangle}{\langle \bm c, \bm g \rangle}
	\end{equation}
	We apply the Cauchy-Schwartz inequality : $\langle \bm c, \bm g \rangle \leq \|\bm c\|_2 \|\bm g \|_2$.
	 \begin{equation}
		n \mathbb E_p[D(\balpha^+) - D(\balpha)]
		\geq \frac{ \|g\|_2 }{\|c\|_2}
		=  \frac{\chi(g)}{\QM(c)} \bar g \, ,
	\end{equation}
	where $\QM$ denotes the quadratic mean. Finally we divide both sides by n to complete the proof:
	\begin{equation}
		\mathbb E_p[D(\balpha^+) - D(\balpha)]
		\geq \frac{ \chi(g) }{n \QM(c)} \bar g \, .
	\end{equation}
\end{proof}

\end{document}